\documentclass[lettersize,journal,doublecolumn]{IEEEtran}
\IEEEoverridecommandlockouts
\usepackage{amsmath,amsfonts}
\usepackage{amssymb}
\usepackage[caption=false,font=footnotesize,labelfont=sf,textfont=sf]{subfig}
\usepackage{stfloats}
\usepackage{url}
\usepackage{graphicx}
\usepackage{cite}
\usepackage{float}
\usepackage{subfig}
\usepackage{multirow}
\usepackage{color}
\usepackage{makecell}
\usepackage{booktabs}
\usepackage{mathtools}
\usepackage{bm}
\usepackage[colorlinks,citecolor=blue,linkcolor=blue]{hyperref}
\newtheorem{corollary}{Corollary}

\newtheorem{proposition}{Proposition}
\newtheorem{theorem}{Theorem}
\newtheorem{assumption}{Assumption}

\newtheorem{algorithm}{Algorithm}
\usepackage{algorithm, algorithmicx, algpseudocode}
\begin{document}

\title{Generator-Guided Inverse Sampling for L\'evy-Driven Generative Models}

\author{Tianfu Qi, \emph{Graduate Student Member, IEEE}, Jun Wang, \emph{Senior Member, IEEE}, Jun Zhang, \emph{Fellow, IEEE}
\thanks{Tianfu Qi, Jun Wang are with the National Key Laboratory of Wireless Communications, University of Electronic Science and Technology of China, Chengdu 611731, China (e-mail: 202311220634@std.uestc.edu.cn; junwang@uestc.edu.cn).}
\thanks{J. Zhang is with the Department of Electronic and Computer Engineering, Hong Kong University of Science and Technology, Hong Kong, China (e-mail: eejzhang@ust.hk).}
}

\maketitle

\begin{abstract}
This paper studies inverse sampling for L\'evy-driven generative models from the perspective of Markov generators. Unlike conventional diffusion models, L\'evy-driven dynamics involve infinite jump activities, which makes their reverse process nonlocal and difficult to characterize using score information alone. We address this challenge by analyzing the forward and reversed generators. It is derived that the reversed jump component generally becomes a state-dependent Markov jump process governed by a nonlocal density ratio. This observation motivates a structured reverse sampler that decomposes the dynamics into diffusion, small jump, and large jump components. Based on this characterization, we develop a computationally tractable sampler for a class of isotropic linear L\'evy SDEs with symmetric $\alpha$-stable jump components. For the jump component, the neural network is used only to amortize the rate of large jump activities, while jump amplitudes are generated from analytically derived conditional distributions, which improves interpretability and controllability. Efficient implementation techniques are further introduced under this setting to avoid expensive high-dimensional integration and sampling. The sampler is further adapted to approximate observation-guided sampling and applied to OFDM-SISO channel estimation under mixed Gaussian and impulsive noise. Simulations show robust estimation performance with a favorable tradeoff between complexity and performance.
\end{abstract}

\begin{IEEEkeywords}
Inverse sampling, Lévy process, stable distribution, generative model
\end{IEEEkeywords}

\section{Introduction}
Diffusion models based on score matching have emerged as a powerful class of generative models \cite{paper1,paper2,paper3,paper4}. They have also attracted increasing attention in signal processing applications, including inverse problem solving \cite{paper5,paper6,paper7,paper8,paper9,paper10}, image restoration \cite{paper11,paper12,paper13,paper14}, and wireless channel estimation \cite{paper15,paper16,paper17}. By gradually perturbing data with Gaussian noise and learning the reverse dynamics through score matching, these models provide a stochastic framework for sampling from complex data distributions.

Despite their success, conventional diffusion models are mainly built on the Wiener process. Their reverse process is characterized by local score information and infinitesimal Gaussian perturbations \cite{paper1,paper3}. As a result, long-range probability transport is usually achieved through a sequence of small reverse steps. This local update mechanism may affect sampling efficiency and modeling flexibility \cite{paper18}, especially when the target distribution has heavy tails, impulsive components, or highly multimodal structures \cite{paper19,paper20}.

Lévy-driven generative models provide a natural extension by incorporating jump components into the forward dynamics \cite{book1}. The resulting nonlocal transitions are well suited for modeling heavy-tailed perturbations. They can also provide more flexible distribution transport than standard diffusion dynamics \cite{paper19}. However, these advantages bring new challenges for inverse sampling. Unlike the case without jumps, the time reversal of a Lévy-driven process cannot be fully characterized by local score information alone \cite{paper19,paper20}. Its reversed jump component depends on a nonlocal density ratio. It generally becomes a Markov jump process with state-dependent kernels, which makes the inverse sampler difficult to design.

It has been shown in \cite{paper19} that the denoising score matching (DSM) framework can still be applied to inverse sampling for Lévy processes. Under this formulation, the neural network is required to learn denoising targets associated with impulsive perturbations, such as those induced by $\alpha$-stable distributions. However, different from white Gaussian noise (WGN), impulse noise (IN) contains multiple outliers with large amplitudes. Compared with WGN, which has a relatively stable envelope over a long time interval, IN is much more impulsive and chaotic. Therefore, it is difficult to learn the statistical characteristics of impulse noise at different moments. In this case, full neural parameterization makes the sampler less interpretable and controllable.

Motivated by these observations, we aim to establish a generator-guided inverse sampling approach for L\'evy-driven generative models. The generator provides a fundamental characterization of the infinitesimal statistics of Markov processes and naturally reveals the nonlocal structure of the reversed jump dynamics. Based on this structural characterization, we design a practical sampler under isotropic linear $\alpha$-stable assumptions, so that the main components of the reverse dynamics can be implemented in a statistically interpretable and computationally tractable manner. The main contributions of this paper are summarized as follows.
\begin{itemize}
    \item \textbf{First, we provide a generator-based characterization of the time-reversed dynamics of L\'evy-driven Markov processes.} The analysis shows that the reversed jump component is generally a state-dependent Markov jump process whose kernel involves a nonlocal density ratio. This result clarifies why reverse sampling for L\'evy-driven processes cannot, in general, be reduced to score-based local diffusion updates.

    \item \textbf{Second, motivated by this characterization, we develop a practical inverse sampler for isotropic linear L\'evy SDEs with symmetric $\alpha$-stable jump components.} The reverse dynamics are decomposed into diffusion, small jump, and large jump parts. The small jump contribution is approximated through a Gaussian surrogate, while large jumps are handled explicitly through a rate-and-amplitude decomposition. The neural network is used to amortize the large jump rate, whereas jump amplitudes are sampled from analytically derived conditional distributions.

    \item \textbf{Third, we adapt the proposed sampler to approximate observation-guided sampling under observation models and demonstrate its use in OFDM-SISO channel estimation with mixed Gaussian and impulsive noise.}  In particular, the prior-trained jump rate network is retained as a proposal mechanism and the observation likelihood is incorporated through reweighted distributions. The resulting algorithm provides a robust estimator under heavy-tailed perturbations and offers an interpretable alternative to fully neural reverse-transition parameterizations.
\end{itemize}

The remainder of this paper is organized as follows. In Section \ref{section_1}, we describe the Lévy process considered throughout this paper and present the corresponding assumptions. Statistical analyses from the generator perspective are given in Section \ref{section_2}. Based on the above theoretical results, an efficient inverse sampling algorithm is designed in Section \ref{section_3}. Section \ref{section_4} provides simulation results for channel estimation under mixed channel noise to validate the proposed inverse sampling framework. Section \ref{section_5} concludes the paper.

\emph{Notations:} Vectors are denoted by bold letters and are assumed to be column vectors by default. Superscripts and subscripts are also used to denote vectors and vector sequences. For example, given an $m$-dimensional vector $\bm{x}$, we have $\bm{x}=[x_1,x_2,\cdots,x_m]^\top$ and $\bm{x}_1^p=[\bm{x}_1,\cdots,\bm{x}_p]^\top$. Uppercase letters denote random variables (RVs), and lowercase letters denote their realizations, e.g., $X$ and $x$. The operators $\Delta$, $\nabla$, and $\nabla\cdot$ denote the Laplace operator, gradient operator, and divergence operator, respectively. The notation $\text{diag}(a_1,a_2,\cdots,a_L)$ represents a diagonal matrix with diagonal elements $a_1,a_2,\cdots,a_L$. $\langle\bm{x},\bm{y}\rangle$ denotes the inner product of vectors $\bm{x}$ and $\bm{y}$. $1_{\mathcal{A}}(x)$ denotes the indicator function, which equals 1 if $x\in\mathcal{A}$. The real number domain is denoted by $\mathbb{R}$. The notation $X\sim\cdot$ indicates that $X$ follows a certain distribution.

\section{Preliminaries}\label{section_1}
In this paper, we consider a Lévy process with drift and diffusion components, which can be generally written as
\begin{align}\label{SDE}
d{\bm{X}_t} = \bm{b}(\bm{X}_t,t)dt + \Phi_G(t) d{\bm{W}_t} &+\Phi_S(t)d\bm{L}_t,\nonumber\\
&t:0\rightarrow T,\bm{X}_t\in\mathbb{R}^D
\end{align}
where $D$ denotes the dimension of the SDE. The term $\bm{b}(\bm{X}_t,t)$ is the drift coefficient of the dynamics, which depends on both the time index $t$ and the system state $\bm{X}_t$. The processes $\bm{W}_t$ and $\bm{L}_t$ denote the Wiener process and the $\alpha$-stable process, respectively. The matrices $\Phi_G(t)$ and $\Phi_S(t)$ denote the scaling matrices for $\bm{W}_t$ and $\bm{L}_t$, respectively. For isotropic cases, $\Phi_G(t)$ and $\Phi_S(t)$ reduce to scaled identity matrices. In the sequel, we focus on the SDE under the following assumptions.

\begin{assumption}\label{assumption_1}
The Wiener process $\bm{W}_t$ and the Lévy process $\bm{L}_t$ are mutually independent.
\end{assumption}
\begin{assumption}\label{assumption_2}
$\bm{L}_t$ is a $D$-dimensional symmetric isotropic $\alpha$-stable Lévy process with stability index $\alpha\in(0,2)$.
\end{assumption}

Our ultimate goal is to investigate the characteristics of the reverse process corresponding to the SDE in \eqref{SDE} and design an efficient reverse sampler. Unfortunately, unlike a diffusion process driven only by Brownian motion, \eqref{SDE} is a Lévy process with infinite jump activities. Its reverse process cannot be described using only local information. In other words, if we set $\Phi_S(t)\equiv \bm{0}$, the backward process of \eqref{SDE} can be determined by the score information, i.e., $\nabla\log p(\bm{x}_t)$. This information is local since it only requires the gradient of the density. This property can be obtained by analyzing the forward and backward Kolmogorov equations, which are special cases of the Kramers-Moyal (KM) expansion. The core idea of the KM expansion is to express the time evolution of the probability density as an infinite series of state-space changes.

For diffusion processes driven by Brownian motion, this series can be truncated after the first two terms, which leads to the forward Kolmogorov equation. However, if $\Phi_S(t)\neq\bm{0}$, large jumps may appear in \eqref{SDE} due to $d\bm{L}_t$. As a result, $\Vert\bm{X}_{t+\Delta t}-\bm{X}_t\Vert$ can be quite large. Moreover, the KM expansion involves the $j$-th moment of $X_t$, where $j\in\mathbb{Z}^+\bigcup\{0\}$. For a Lévy process driven by an $\alpha$-stable process, finite $p$-th moments with $p\geq\alpha$ do not exist. Therefore, the reverse process becomes much more complicated. In the following, we use the generator to analyze the statistical properties of \eqref{SDE}.

Given the forward dynamics in \eqref{SDE}, its generator is given by \cite{book1}
\begin{align}\label{forward_generator}
\left( {\mathcal{L}_Ff} \right)(\bm{x}) =& \bm{b}(\bm{x},t)\cdot\nabla f(\bm{x})+\frac{1}{2}{\rm{Tr}}\left( {\Sigma (t)\nabla^2 f(\bm{x})} \right)\nonumber\\
&+ (\mathcal{L}_{J,F}f)(\bm{x})
\end{align}
where $\Sigma(t)=\Phi_G(t)\Phi_G(t)^\top$ and $(\mathcal{L}_{J,F}f)(\bm{x})$ denotes the forward generator of the process $\Phi_S(t) d\bm{L}_t$. The backward generator of the reversed version of \eqref{SDE} will be characterized in the next section. Before ending this section, we introduce the following assumptions, which ensure the validity of the subsequent analysis.

\begin{assumption}\label{assumption_3}
The forward SDE admits a non-explosive Markov solution with transition density $p(\bm{x}_t)$ with respect to the Lebesgue measure.
\end{assumption}

\begin{assumption}\label{assumption_4}
The density $p(\bm{x}_t)$ is strictly positive and sufficiently smooth so that $\nabla\log p(\bm{x}_t)$ is well defined.
\end{assumption}

\section{Generator-Based Reverse Dynamics}\label{section_2}
In this section, we first derive the backward generator corresponding to the forward process in \eqref{SDE}. Then, we theoretically analyze how to decompose the whole inverse sampling process and generate each component.

\subsection{Backward generator}
Throughout this paper, the term ``backward generator'' refers to the infinitesimal generator of the time-reversed process. It should not be confused with the backward Kolmogorov operator associated with the forward process. For notational clarity, we write $p(\bm{x}_t)$ for the marginal density of $\bm{X}_t$.

Based on \eqref{SDE} and its forward generator \eqref{forward_generator}, the backward generator is given in the following proposition.

\begin{proposition}[Backward generator]\label{Proposition_1}
\textit{Consider the forward SDE in \eqref{SDE} with L\'evy measure $\nu$. Let $\nu_t = (\Phi_S(t))_{\#}\nu$ be the push-forward L\'evy measure induced by the jump scaling matrix $\Phi_S(t)$. Suppose that the forward process admits a strictly positive marginal density $p(\bm{x}_t)$, and that $p(\bm{x}_t)$ and $\bm{b}(\bm{x},t)$ are sufficiently smooth and integrable so that the following integrations by parts and principal-value integrals are well defined. If the scaled L\'evy measure $\nu_t$ is symmetric, then for any test function $f\in C_c^2(\mathbb{R}^D)$, the generator of the reversed process can be written as
\begin{align}\label{proposition_equation_1}
(L_B f)(\bm{x}_t)
=&
\left[-\bm{b}(\bm{x}_t,t)+\Sigma(t)\nabla \log p(\bm{x}_t)\right]\cdot \nabla f(\bm{x}_t)
\nonumber\\
&+\frac{1}{2}\operatorname{Tr}\left(\Sigma(t)\nabla^2 f(\bm{x}_t)\right)+(L_{J,B} f)(\bm{x}_t)
\end{align}
\begin{align}\label{proposition_equation_2}
(L_{J,B} f)(\bm{x}_t)
=&\operatorname{p.v.}
\int_{\mathbb R^D}
\left[
f(\bm{x}_t+\bm{v})-f(\bm{x}_t)
\right]\nonumber\\
&\times\frac{p(\bm{x}_t+\bm{v})}{p(\bm{x}_t)}
\nu_t(d\bm{v}).
\end{align}
where $\operatorname{p.v.}$ denotes the Cauchy principal-value interpretation.}
\end{proposition}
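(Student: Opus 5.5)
The plan is to use the standard duality characterization of time reversal. The generator $L_B$ of the reversed process is the operator that satisfies, for all test functions $f,g\in C_c^2(\mathbb{R}^D)$,
\begin{align*}
\int g\,(L_B f)\,p\,d\bm{x}=\int f\,(\mathcal{L}_F^{*}(gp))\,d\bm{x}-\int f g\,\partial_t p\,d\bm{x},
\end{align*}
which can be written compactly as $L_B f = p^{-1}\left[\mathcal{L}_F^{*}(pf)-f\,\mathcal{L}_F^{*}p\right]$. Here $\mathcal{L}_F^{*}$ is the Lebesgue adjoint of \eqref{forward_generator}, and the marginal satisfies the Fokker--Planck equation $\partial_t p=\mathcal{L}_F^{*}p$, whose sign is flipped under reversed time. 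I will therefore compute $\mathcal{L}_F^{*}$ term by term, evaluate $p^{-1}[\mathcal{L}_F^{*}(pf)-f\mathcal{L}_F^{*}p]$, and add the sign convention of reversed time on the first-order part. By Assumption~\ref{assumption_1} the diffusion and jump parts of \eqref{forward_generator} are additive, so they can be handled separately.

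\textbf{Local part.} The drift adjoint is $-\nabla\cdot(\bm{b}\,\cdot)$. Then
\begin{align*}
p^{-1}\left[-\nabla\cdot(\bm{b}pf)+f\,\nabla\cdot(\bm{b}p)\right]=-\bm{b}\cdot\nabla f.
\end{align*}
The diffusion adjoint is $\frac12\nabla\cdot\nabla\cdot(\Sigma\,\cdot)$. Expanding $\nabla^2(pf)$ gives
\begin{align*}
p^{-1}\left[\tfrac12{\rm Tr}\left(\Sigma\nabla^2(pf)\right)-\tfrac12 f\,{\rm Tr}\left(\Sigma\nabla^2 p\right)\right]=\tfrac12{\rm Tr}\left(\Sigma\nabla^2 f\right)+\Sigma\nabla\log p\cdot\nabla f.
\end{align*}
These two pieces reproduce the first line of \eqref{proposition_equation_1}. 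This is the classical Anderson/Haussmann--Pardoux computation. Assumption~\ref{assumption_4} makes $\nabla\log p$ meaningful, and the integrations by parts are justified by the compact support of $f,g$.

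\textbf{Jump part.} I write $\mathcal{L}_{J,F}f(\bm{x})=\operatorname{p.v.}\int[f(\bm{x}+\bm{v})-f(\bm{x})]\,\nu_t(d\bm{v})$. Symmetry of $\nu_t$ makes the compensator term $\bm{v}\cdot\nabla f\,1_{\|\bm{v}\|\le1}$ vanish in the principal-value sense, which is why p.v.\ appears. Its adjoint follows from the change of variables $\bm{x}\mapsto\bm{x}-\bm{v}$ together with $\nu_t(d\bm{v})=\nu_t(-d\bm{v})$:
\begin{align*}
\mathcal{L}_{J,F}^{*}h(\bm{x})=\operatorname{p.v.}\int[h(\bm{x}+\bm{v})-h(\bm{x})]\,\nu_t(d\bm{v}).
\end{align*}
So the jump operator is self-adjoint. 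Plugging in $h=pf$ and subtracting $f\,\mathcal{L}_{J,F}^{*}p$ gives
\begin{align*}
p(\bm{x})^{-1}\operatorname{p.v.}\int\left[p(\bm{x}+\bm{v})f(\bm{x}+\bm{v})-p(\bm{x}+\bm{v})f(\bm{x})\right]\nu_t(d\bm{v}),
\end{align*}
which is exactly \eqref{proposition_equation_2}. The push-forward $\nu_t=(\Phi_S(t))_{\#}\nu$ enters only through the substitution $\bm{v}=\Phi_S(t)\bm{u}$ in the jump generator of $\Phi_S(t)d\bm{L}_t$.

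\textbf{Main obstacle.} The hard step is the rigorous handling of the singular small-jump region, where $\nu_t(d\bm{v})\sim\|\bm{v}\|^{-D-\alpha}d\bm{v}$ and $\alpha\in(0,2)$. Individually, $\int[h(\bm{x}+\bm{v})-h(\bm{x})]\nu_t(d\bm{v})$ need not converge absolutely for $\alpha\ge1$. I would therefore truncate to $\{\|\bm{v}\|>\varepsilon\}$, where all integrals are finite. On that set I would perform the change of variables and Fubini exchange, and then let $\varepsilon\to0$. Symmetry cancels the odd first-order Taylor terms in each of $\mathcal{L}_{J,F}(pf)$ and $f\mathcal{L}_{J,F}p$ separately. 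The combined integrand is $O(\|\bm{v}\|)$ after symmetrization, so it is integrable against $\nu_t$ near the origin by smoothness of $p$ and $f$. The tail is controlled by the integrability hypotheses of the proposition. This justifies the principal-value limit and completes the argument.
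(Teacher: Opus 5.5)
Your proposal is correct in substance, but it reaches the result by a different route than the paper.

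The paper never computes an adjoint. It proceeds as follows:
\begin{itemize}
\item It writes the forward jump generator in compensated L\'evy--Khintchine form.
\item It imports from Conforti--L\'eonard the flux identity $p(\bm{x})\overleftarrow{K}_{t,\bm{x}}(d\bm{v})=p(\bm{x}+\bm{v})\overrightarrow{K}_{t,\bm{x}+\bm{v}}(d(-\bm{v}))$, together with the relation between the forward and backward canonical drifts.
\item It reads off $\overleftarrow{K}_{t,\bm{x}}(d\bm{v})=\frac{p(\bm{x}+\bm{v})}{p(\bm{x})}\nu_t(d(-\bm{v}))$ and cancels the compensator term against the backward canonical drift.
\item Only at the end does it invoke symmetry. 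The diffusion part is simply quoted from the classical reversal result.
\end{itemize}

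You instead use the $p$-weighted duality $L_Bf=p^{-1}[\mathcal{L}_F^*(pf)-f\,\mathcal{L}_F^*p]$. This gives the score drift and the density-ratio kernel from a single computation, with symmetry entering transparently as self-adjointness of the jump operator.

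Your route is self-contained and treats the local and nonlocal parts uniformly. It actually verifies the diffusion reversal, and it sidesteps the truncation/drift bookkeeping by working in p.v.\ form from the start. The paper's route gives a kernel-level identity valid for non-symmetric and state-dependent kernels. It also rests on a theorem with explicit hypotheses that describes the reversed jumps pathwise. Your argument identifies the reversed generator only on $C_c^2$ test functions, which is, however, exactly what the proposition asserts.

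A few slips should be fixed.
\begin{itemize}
\item In your first display, $f$ and $g$ are interchanged. The correct identity is $\int g\,(L_Bf)\,p\,d\bm{x}=\int g\,\mathcal{L}_F^*(fp)\,d\bm{x}-\int fg\,\partial_tp\,d\bm{x}$. The version you wrote would give $L_Bf=\mathcal{L}_Ff-f\,\partial_tp/p$. The compact formula you actually use is the right one.
\item That formula already encodes the reversal of time through the $-\partial_tp$ term. So nothing further should be ``added'' to the first-order part. Indeed, your computation correctly yields $-\bm{b}\cdot\nabla f$ with no extra sign.
\item Once the odd first-order terms cancel, the symmetrized integrand is $O(\|\bm{v}\|^2)$, not $O(\|\bm{v}\|)$. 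An $O(\|\bm{v}\|)$ bound would not be integrable against $\|\bm{v}\|^{-D-\alpha}d\bm{v}$ when $\alpha\ge1$. In any case, since $\mathcal{L}_{J,F}(pf)$ and $f\,\mathcal{L}_{J,F}p$ exist separately as p.v.\ limits, the p.v.\ limit of their difference exists automatically.
\end{itemize}
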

\begin{IEEEproof}
The proof is relegated to Appendix \ref{appendix_1}.
\end{IEEEproof}


\textbf{Remark 1:} The jump integral in Proposition \ref{Proposition_1} is understood as
\begin{align}
&\operatorname{p.v.}\int_{\mathbb R^D}\left[f(\bm{x}_t+\bm{v})-f(\bm{x}_t)\right]\frac{p(\bm{x}_t+\bm{v})}{p(\bm{x}_t)}\nu_t(d\bm{v})\nonumber\\
=&\lim_{\epsilon\rightarrow0}\int_{\|\bm{v}\|>\epsilon}\left[f(\bm{x}_t+\bm{v})-f(\bm{x}_t)\right]\frac{p(\bm{x}_t+\bm{v})}{p(\bm{x}_t)}\nu_t(d\bm{v})\nonumber.
\end{align}
For $\alpha<1$, the integral is locally absolutely integrable under standard smoothness assumptions. For $\alpha\geq 1$, the uncompensated integral is generally not absolutely integrable near the origin and should be interpreted in the Cauchy principal-value sense. To see this, for smooth $f$ and $p(\bm{x}_t)$, we have $f(\bm{x}_t+\bm{v})-f(\bm{x}_t)=\nabla f(\bm{x}_t)^{\top}\bm{v}+O(\Vert\bm{v}\Vert^2)$ and $\frac{p(\bm{x}_t+\bm{v})}{p(\bm{x}_t)}=1+\bm{v}^{\top}\nabla\log p(\bm{x}_t)+O(\Vert\bm{v}\Vert^2)$. The leading first-order term is odd in $v$ and is canceled by the symmetry of $\nu_t$ in the principal-value sense. The remaining second-order terms are locally integrable for $\alpha<2$. This justifies the principal-value form used in Proposition \ref{Proposition_1}.

\textbf{Remark 2:} We specify the explicit expression of the scaled Lévy measure $\nu_t(d\bm{v})$ to facilitate the following analysis. For the $D$-dimensional case, the standard Lévy measure is defined as $\nu(d\bm{v})=C_{D,\alpha}\Vert\bm{v}\Vert^{-D-\alpha}$ \cite{book1}, where
$C_{D,\alpha}=\frac{\alpha2^{\alpha-1}\Gamma(\frac{D+\alpha}{2})}{\pi^{\frac{D}{2}}\Gamma(1-\frac{\alpha}{2})}$. Combined with the effect of $\Phi_S(t)=\sigma_S(t)\bm{I}_D$, we have $\nu_t(d\bm{v})=|\sigma_S(t)|^{\alpha}\nu(d\bm{v})$. In the following, we denote $C_{D,\alpha,\sigma_S}\triangleq|\sigma_S(t)|^{\alpha}C_{D,\alpha}$ to avoid redundancy.

From Proposition \ref{Proposition_1}, we can see that the backward kernel is related to the density ratio $\frac{p(\bm{x}_t+\bm{v})}{p(\bm{x}_t)}$, which contains global information. Therefore, the jump part only belongs to a Markov-type jump process rather than a Lévy process. This process is much more general and is not as mathematically tractable as the forward process. Consequently, the total reversed SDE corresponding to \eqref{SDE} cannot be expressed as the superposition of a modified drift term, a diffusion term, and a stable process. This makes it challenging to design an efficient sampler directly, as in the diffusion case.

Besides guiding the design of the reverse sampler, the backward generator is also useful for understanding discrete diffusion processes. For example, in a standard DDPM driven by a Gaussian transition kernel, $p(\bm{x}_{t-\Delta t}|\bm{x}_t)$ can be modeled by a Gaussian distribution for the following reason. First, with sufficient diffusion, $\bm{x}_T$ can be approximated by a Gaussian distribution. Next, based on the backward Kolmogorov equation, the drift of the reversed SDE is $-(\bm{b}(\bm{x}_t,t)-g(t)^2\nabla\log p(\bm{x}_t,t))dt$, where $g(t)$ is the coefficient of the Wiener process in the forward SDE. At $t=T$, we have $\nabla\log p(\bm{x}_T,T)\propto-\bm{x}_T$. Then, we discretize the continuous time interval with step size $\Delta t$. In this case, $\bm{x}_{T-\Delta t}=\bm{x}_{T}-(\bm{b}(\bm{x}_T,T)+\bm{x}_T)\Delta t+g(t)\sqrt{\Delta t}\bm{w}$, where $\bm{w}\sim\mathcal{N}(\bm{0},\bm{I}_D)$ and $\bm{b}(\bm{x}_t,t)$ is usually set as an affine function of $\bm{x}_t$. This recursive formula is equivalent to the summation of Gaussian-distributed RVs. Therefore, $p(\bm{x}_{t-\Delta t}|\bm{x}_t)$ can be treated as a Gaussian distribution, and the objective loss function can be significantly simplified by regressing the expectation of $\bm{x}_t,t:T\rightarrow0$.

Unfortunately, according to Proposition \ref{Proposition_1}, this simplification does not hold for the process in \eqref{SDE}. In the sequel, we will design a simple loss function that is more suitable for neural network learning.

\subsection{Reverse sampling decomposition}
From the definition of the generator, given the jump kernel $K(d\bm{v})$, the term $\int_{\mathbb{R}^D}(f(\bm{x}+\bm{v})-f(\bm{x}))K(d\bm{v})$ indicates that the jump amplitude follows the distribution $K(d\bm{v})/\int_{\mathbb{R}^D}K(d\bm{v})$. However, the number of small jumps with $\Vert \bm{v}\Vert\leq\epsilon$ is usually infinite due to the singularity of the Lévy measure at the origin. Therefore, we handle small and large jumps separately. Specifically, we decompose the original integral by truncation with threshold $\epsilon$:
\begin{align}\label{jump_decomposition}
&\int_{\mathbb{R}^D}(f(\bm{x}+\bm{v})-f(\bm{x}))K(d\bm{v})\nonumber\\
=&\int_{\Vert \bm{v}\Vert\leq\epsilon}(f(\bm{x}+\bm{v})-f(\bm{x}))K(d\bm{v})\nonumber\\
&+\int_{\Vert \bm{v}\Vert>\epsilon}(f(\bm{x}+\bm{v})-f(\bm{x}))K(d\bm{v})
\end{align}

In other words, jumps with amplitude larger than $\epsilon$ are treated as large jumps. For large jumps, the occurrence frequency is finite. The number of large jumps follows a Poisson distribution with rate $\lambda\triangleq\int_{\Vert \bm{v}\Vert>\epsilon}K(d\bm{v})$. For a sufficiently small time interval $\Delta t$, the occurrence of a large jump can also be approximated by a Bernoulli distribution with probability $\lambda\Delta t$. If a large jump occurs, we need to obtain one jump sample from the normalized distribution $\frac{K(d\bm{v})}{\lambda}$. Otherwise, no large jump occurs within the interval $\Delta t$, and the reversed process reduces to the standard diffusion process.

For the case with $\Vert \bm{v}\Vert\leq\epsilon$, the Lévy measure is not integrable, and there are infinitely many small jumps. Unlike large jump activities, these small jumps cannot be enumerated explicitly. The distribution of small jumps is also complicated, which does not follow a Gaussian or stable distribution due to amplitude truncation. A simple method is to directly ignore small jump activities when the threshold $\epsilon$ is small enough. However, this may introduce considerable approximation error, especially when the small jumps have a nonzero mean. Meanwhile, this effect accumulates when $\Delta t$ is small.

Here, we adopt a more reasonable approach. After truncation, the small jumps have a finite second moment. Thus, we replace the small jump generator by a Gaussian generator matching the first two local moments. This is a weak approximation whose generator error is controlled by the third local moment of the truncated Lévy measure. Namely, the small jump part within $[t,t+\Delta t]$, denoted by $\bm{X}_{t,\Delta t}^{\leq\epsilon}$, can be written as
\begin{equation}\label{Gaussian_representation_small_jump}
\bm{X}_{t,\Delta t}^{\leq\epsilon}\approx \bm{b}_{t,\Delta t}^{\leq\epsilon}\Delta t+\sqrt{\Delta t}(\Sigma_{t,\Delta t}^{\leq\epsilon})^{\frac{1}{2}}\bm{W}
\end{equation}
where $\bm{W}$ is the standard Gaussian RV and
\begin{align}\label{Gaussian_parameter_1}
\bm{b}_{t,\Delta t}^{\leq\epsilon}=&\operatorname{p.v.}\int_{\Vert \bm{v}\Vert\leq\epsilon}\bm{v} K(d\bm{v})
\end{align}

Substituting the state-dependent backward jump kernel into the small jump drift, we have
\begin{align}\label{Gaussian_parameter_1_simplified}
\bm{b}_{t,\Delta t}^{\leq\epsilon}
&=
\operatorname{p.v.}
\int_{\Vert \bm{v}\Vert\leq\epsilon}
\bm{v}
\frac{p(\bm{x}_t+\bm{v})}{p(\bm{x}_t)}
\nu_t(d\bm{v})
\nonumber\\
&=
\int_{\Vert \bm{v}\Vert\leq\epsilon}
\bm{v}
\left(
\frac{p(\bm{x}_t+\bm{v})}{p(\bm{x}_t)}
-1
\right)
\nu_t(d\bm{v})
\nonumber\\
&=
\int_{\Vert \bm{v}\Vert\leq\epsilon}
\bm{v}
\left(
\bm{v}^{\top}\nabla\log p(\bm{x}_t)
+
R_p(\bm{x}_t,\bm{v},t)
\right)
\nu_t(d\bm{v})
\nonumber\\
&=
A_{\nu_t}\nabla\log p(\bm{x}_t)
+
R_{b,\epsilon}(\bm{x}_t,t),
\end{align}
where the truncated second-order moment matrix is defined as $A_{\nu_t}\triangleq\int_{\Vert \bm{v}\Vert\leq\epsilon}\bm{v}\bm{v}^{\top}\nu_t(d\bm{v})$ and the Taylor remainder is $R_{b,\epsilon}(\bm{x}_t,t)\triangleq\int_{\Vert \bm{v}\Vert\leq\epsilon}\bm{v}R_p(\bm{x}_t,\bm{v},t)\nu_t(d\bm{v})$. Here, the second equality follows from the symmetry of $\nu_t$, which gives
\begin{align}
\operatorname{p.v.}
\int_{\Vert \bm{v}\Vert\leq\epsilon}
\bm{v}\nu_t(d\bm{v})
=
\bm{0}.
\end{align}

If $p(\bm{x}_t)$ is locally smooth and strictly positive, then
\begin{align}
\frac{p(\bm{x}_t+\bm{v})}{p(\bm{x}_t)}
=
1
+
\bm{v}^{\top}&\nabla\log p(\bm{x}_t)
+
R_p(\bm{x}_t,\bm{v},t),\nonumber\\
&|R_p(\bm{x}_t,\bm{v},t)|
\leq
C\Vert\bm{v}\Vert^2.
\end{align}

For the isotropic $\alpha$-stable L\'evy measure
$\nu_t(d\bm{v})=C\Vert\bm{v}\Vert^{-D-\alpha}d\bm{v}$, the drift remainder satisfies
\begin{align}
\Vert R_{b,\epsilon}(\bm{x}_t,t)\Vert
&\leq
C
\int_{\Vert \bm{v}\Vert\leq\epsilon}
\Vert\bm{v}\Vert^3
\nu_t(d\bm{v})
\nonumber\\
&=
C\epsilon^{3-\alpha}.
\end{align}

Therefore, by replacing the unknown score $\nabla\log p(\bm{x}_t)$ with the score network $s^{\theta_1}(\bm{x}_t,t)$, the practical approximation becomes
\begin{align}
\bm{b}_{t,\Delta t}^{\leq\epsilon}
\approx
A_{\nu_t}s^{\theta_1}(\bm{x}_t,t).
\end{align}

Similarly, the covariance matrix of the small jump component is given by
\begin{align}\label{Gaussian_parameter_2_simplified}
&\bm{\Sigma}_{t,\Delta t}^{\leq\epsilon}\nonumber\\
&=
\int_{\Vert \bm{v}\Vert\leq\epsilon}
\bm{v}\bm{v}^{\top}
\left(
1
+
\bm{v}^{\top}\nabla\log p(\bm{x}_t)
+
R_p(\bm{x}_t,\bm{v},t)
\right)
\nu_t(d\bm{v})
\nonumber\\
&=A_{\nu_t}+R_{\Sigma,\epsilon}(\bm{x}_t,t).
\end{align}
where the remaining Taylor term satisfies
\begin{align}
\Vert R_{\Sigma,\epsilon}(\bm{x}_t,t)\Vert&\leq C\int_{\Vert \bm{v}\Vert\leq\epsilon}\Vert\bm{v}\Vert^4\nu_t(d\bm{v})=C\epsilon^{4-\alpha}.
\end{align}

Thus, the covariance matrix can be approximated as
\begin{align}\label{Gaussian_parameter_2}
\bm{\Sigma}_{t,\Delta t}^{\leq\epsilon}\approx A_{\nu_t}.
\end{align}

Both \eqref{Gaussian_parameter_1_simplified} and \eqref{Gaussian_parameter_2} admit closed-form expressions based on the polar coordinate transform. For instance,
\begin{align}
A_{\nu_t}=&\int_{\Vert \bm{v}\Vert\leq\epsilon}\bm{v}\bm{v}^{\top}\nu_t(d\bm{v})\nonumber\\
=&\frac{|\mathbb{S}_{D-1}|C_{D,\alpha,\sigma_S}\bm{I}_D}{D}\int_{0}^{\epsilon}r^{1-\alpha}dr\nonumber\\
=&\frac{2\pi^{\frac{D}{2}}C_{D,\alpha,\sigma_S}\epsilon^{2-\alpha}}{D\Gamma(\frac{D}{2})(2-\alpha)}\bm{I}_D,
\end{align}
where $\bm{I}_D$ denotes the $D$-dimentional identity matrix and $\mathbb{S}_{D-1}$ represents the $D-1$-dimensional sphere. Note that this is a generator-level weak approximation rather than an exact Gaussian representation of the small jump sum. Denote the effect of large jumps on $\bm{X}_t$ by $\bm{X}_{t,\Delta t}^{>\epsilon}$. Then, the total jump influence during the reverse sampling process is $\bm{X}_{t,\Delta t}^{\leq\epsilon}+\bm{X}_{t,\Delta t}^{>\epsilon}$. Finally, by combining the externally applied drift term and the reversed diffusion part, the recursive relation between $\bm{X}_{t-\Delta t}$ and $\bm{X}_t$ can be constructed.

\section{Inverse Sampling}\label{section_3}
According to the above analysis, the main challenge of inverse sampling comes from large jump sampling. Therefore, this section mainly focuses on how to generate large jump samples based on the current system state and time instant. First, we explain how to obtain large jump samples. We also show that, from a theoretical perspective, a neural network is not necessary for reverse sampling. These observations help determine the learning target and design the corresponding network structure. Then, two techniques are introduced to effectively reduce the complexity. Finally, since additional conditions are often required to control the generation process in practical applications, we also discuss an approximate observation-guided version for inverse problems.

Before proceeding, we first describe a special family of functions that is important for efficient learning.

In many scenarios, we may want to train a network to regress a target $f(\bm{x})$ that is not directly accessible. However, we can efficiently evaluate only its conditional version $f(\bm{x}|\bm{x}_0)$. Thus, we aim to establish the condition under which optimizing the neural network based on $f(\bm{x}|\bm{x}_0)$ is equivalent to optimizing it based on $f(\bm{x})$. This is the core of the following theorem.

\begin{theorem}[Marginal trainable functions]\label{theorem_1}
\textit{Define the accurate target and conditional target as $l(\bm{x}):\mathbb{R}^D\rightarrow\mathbb{R}^{\tilde{D}}$ and $l(\bm{x}|\bm{x}_0):\mathbb{R}^D\rightarrow\mathbb{R}^{\tilde{D}}$, respectively. Denote the neural network by $f^{\theta}(\bm{x})$, which is used to regress $l(\bm{x})$. Then, the optimization problem $\arg\min_{\theta}\mathbb{E}_{\bm{x}\sim p(\bm{x})}[\Vert f^{\theta}(\bm{x})-l(\bm{x})\Vert^2]$ is equivalent to $\arg\min_{\theta}\mathbb{E}_{\bm{x},\bm{x}_0\sim p(\bm{x},\bm{x}_0)}[\Vert f^{\theta}(\bm{x})-l(\bm{x}|\bm{x}_0)\Vert^2]$ if
\begin{equation}\label{Marginal_trainable_conditions}
l(\bm{x})=\int l(\bm{x}|\bm{x}_0)p(\bm{x}_0|\bm{x})d\bm{x}_0
\end{equation}
and the functions $l(\bm{x})$ satisfying \eqref{Marginal_trainable_conditions} are referred to as marginally trainable functions.}
\end{theorem}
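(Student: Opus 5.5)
The plan is to expand both objectives and show that they differ only by an additive constant that does not depend on $\theta$; the two $\arg\min$ sets then coincide. This is the standard argument behind denoising score matching. The only structural input is condition \eqref{Marginal_trainable_conditions}, which says that $l(\bm{x})$ is the conditional expectation $\mathbb{E}[l(\bm{x}|\bm{X}_0)\mid \bm{X}=\bm{x}]$ under $p(\bm{x}_0|\bm{x})$.

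First, I expand the conditional objective:
\begin{align}
\mathbb{E}_{\bm{x},\bm{x}_0}\!\left[\Vert f^{\theta}(\bm{x})-l(\bm{x}|\bm{x}_0)\Vert^2\right]
=&\,\mathbb{E}_{\bm{x}}\!\left[\Vert f^{\theta}(\bm{x})\Vert^2\right]
-2\,\mathbb{E}_{\bm{x},\bm{x}_0}\!\left[\langle f^{\theta}(\bm{x}),l(\bm{x}|\bm{x}_0)\rangle\right]\nonumber\\
&+\mathbb{E}_{\bm{x},\bm{x}_0}\!\left[\Vert l(\bm{x}|\bm{x}_0)\Vert^2\right].\nonumber
\end{align}
The same expansion is applied to the marginal objective, with $l(\bm{x})$ in place of $l(\bm{x}|\bm{x}_0)$. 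The quadratic term in $f^{\theta}$ is identical in both, because the $\bm{x}$-marginal of $p(\bm{x},\bm{x}_0)$ is $p(\bm{x})$. The last terms of both objectives are independent of $\theta$.

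The key step concerns the cross term. I factor $p(\bm{x},\bm{x}_0)=p(\bm{x})p(\bm{x}_0|\bm{x})$ and use Fubini to integrate over $\bm{x}_0$ first. Since $f^{\theta}(\bm{x})$ does not depend on $\bm{x}_0$, it can be pulled out of the inner integral:
\begin{align}
\mathbb{E}_{\bm{x},\bm{x}_0}\!\left[\langle f^{\theta}(\bm{x}),l(\bm{x}|\bm{x}_0)\rangle\right]
=\int p(\bm{x})\Big\langle f^{\theta}(\bm{x}),\int l(\bm{x}|\bm{x}_0)p(\bm{x}_0|\bm{x})d\bm{x}_0\Big\rangle d\bm{x}.\nonumber
\end{align}
Condition \eqref{Marginal_trainable_conditions} turns the inner integral into $l(\bm{x})$, so this equals $\mathbb{E}_{\bm{x}}[\langle f^{\theta}(\bm{x}),l(\bm{x})\rangle]$. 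The two objectives therefore differ by
\begin{align}
C=\mathbb{E}_{\bm{x},\bm{x}_0}\!\left[\Vert l(\bm{x}|\bm{x}_0)\Vert^2\right]-\mathbb{E}_{\bm{x}}\!\left[\Vert l(\bm{x})\Vert^2\right].\nonumber
\end{align}
By Jensen's inequality this equals $\mathbb{E}_{\bm{x}}[\mathrm{tr}\,\mathrm{Cov}(l(\bm{x}|\bm{X}_0)\mid\bm{x})]\ge 0$, and in any case it is independent of $\theta$. The two minimizer sets are therefore identical.

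The main obstacle is the integrability needed to justify these steps; the algebra itself is routine. Each expectation must be finite, for instance $\mathbb{E}\Vert l(\bm{x}|\bm{x}_0)\Vert^2<\infty$ and $\mathbb{E}\Vert f^{\theta}(\bm{x})\Vert^2<\infty$ for all $\theta$. This is what makes the splitting of the square legitimate and lets Fubini apply to the cross term, via Cauchy--Schwarz. If $\mathbb{E}\Vert l(\bm{x}|\bm{x}_0)\Vert^2=\infty$, the constant $C$ is infinite, and the equivalence has to be stated for the differences of the objectives rather than for the objectives themselves. I would state these finiteness assumptions explicitly as part of the tacit hypotheses of the theorem. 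This point matters in the present heavy-tailed $\alpha$-stable setting, where conditional targets could lack second moments.
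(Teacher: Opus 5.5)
Your proposal is correct and follows the paper's proof essentially line for line: expand the square, discard the $\theta$-independent term, and use condition \eqref{Marginal_trainable_conditions} with $p(\bm{x},\bm{x}_0)=p(\bm{x})p(\bm{x}_0|\bm{x})$ to identify the cross terms. Two of your additions are things the paper leaves implicit and are worth keeping: the explicit finite-second-moment hypotheses, which matter in the heavy-tailed setting, and the check that the quadratic terms coincide; note only that the identity for $C$ is the law of total variance, with Jensen merely giving $C\ge 0$.
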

\begin{IEEEproof}
The original optimization problem can be reformulated as follows:
\begin{align}\label{split_original_optimization_problem}
&\mathbb{E}_{\bm{x}\sim p(\bm{x})}[\Vert f^{\theta}(\bm{x})-l(\bm{x})\Vert^2]\nonumber\\
=&\mathbb{E}_{\bm{x}\sim p(\bm{x})}[\Vert f^{\theta}(\bm{x})\Vert^2]-2\mathbb{E}_{\bm{x}\sim p(\bm{x})}[f^{\theta}(\bm{x})l(\bm{x})]\nonumber\\
&+\mathbb{E}_{\bm{x}\sim p(\bm{x})}[\Vert l(\bm{x})\Vert^2]
\end{align}

The third term in \eqref{split_original_optimization_problem} is independent of $\theta$ and can be ignored. For the second term,
\begin{align}\label{transform_original_to_condition}
\mathbb{E}_{\bm{x}\sim p(\bm{x})}[f^{\theta}(\bm{x})l(\bm{x})]=&\int f^{\theta}(\bm{x})\int l(\bm{x}|\bm{x}_0)p(\bm{x}_0|\bm{x})d\bm{x}_0p(\bm{x})d\bm{x}\nonumber\\
=&\int\int f^{\theta}(\bm{x}) l(\bm{x}|\bm{x}_0)p(\bm{x}_0|\bm{x})p(\bm{x})d\bm{x}_0d\bm{x}\nonumber\\
=&\mathbb{E}_{\bm{x},\bm{x}_0\sim p(\bm{x},\bm{x}_0)}[f^{\theta}(\bm{x})l(\bm{x}|\bm{x}_0)]
\end{align}

Plugging \eqref{transform_original_to_condition} into \eqref{split_original_optimization_problem} completes the proof.
\end{IEEEproof}

Theorem \ref{theorem_1} provides sufficient conditions under which a wide range of functions can be used for network training. In fact, Theorem \ref{theorem_1} is general since it includes conventional training frameworks such as score matching and flow matching. For example, the target of score matching is the score function $\nabla\log p(\bm{x}_t)$, which is not accessible in practice. Therefore, the conditional score $\nabla\log p(\bm{x}_t|\bm{x}_0)$ is usually used, together with the simple underlying relationship between $\bm{x}_t$ and $\bm{x}_0$. It can be verified that
\begin{align}\label{score_is_MTF}
\nabla\log p(\bm{x}_t)=&\frac{1}{p(\bm{x}_t)}\nabla\int p(\bm{x}_t|\bm{x}_0)p(\bm{x}_0)d\bm{x}_0\nonumber\\
=&\int \frac{\nabla p(\bm{x}_t|\bm{x}_0)}{p(\bm{x}_t)p(\bm{x}_t|\bm{x}_0)}p(\bm{x}_0)p(\bm{x}_t|\bm{x}_0)d\bm{x}_0\nonumber\\
=&\int \frac{\nabla p(\bm{x}_t|\bm{x}_0)}{p(\bm{x}_t|\bm{x}_0)}p(\bm{x}_0|\bm{x}_t)d\bm{x}_0\nonumber\\
=&\int \nabla\log p(\bm{x}_t|\bm{x}_0)p(\bm{x}_0|\bm{x}_t)d\bm{x}_0
\end{align}

Hence, $\nabla\log p(\bm{x}_t)$ belongs to the class of marginally trainable functions. Similar derivations apply to the training of the conditional vector field in flow matching \cite{paper22}.

\subsection{Sampling of large jumps}
Suppose that the dataset $\bm{x}_{0,j},j=1,\cdots,N$ is available. According to the forward SDE, the expressions of $p(\bm{x}_t|\bm{x}_0)$ and $\frac{p(\bm{x}_t+\bm{v}|\bm{x}_0)}{p(\bm{x}_t|\bm{x}_0)}$ can usually be efficiently obtained. Consequently,
\begin{align}\label{byhand_1}
\frac{p(\bm{x}_t+\bm{v})}{p(\bm{x}_t)}=&\int \frac{p(\bm{x}_t+\bm{v}|\bm{x}_0)}{p(\bm{x}_t|\bm{x}_0)}p(\bm{x}_0|\bm{x}_t)d\bm{x}_0\nonumber\\
=&\int \frac{p(\bm{x}_t+\bm{v}|\bm{x}_0)}{p(\bm{x}_t|\bm{x}_0)}\frac{p(\bm{x}_0)p(\bm{x}_t|\bm{x}_0)}{p(\bm{x}_t)}d\bm{x}_0\nonumber\\
=&\int \frac{p(\bm{x}_t+\bm{v}|\bm{x}_0)}{p(\bm{x}_t|\bm{x}_0)}w(\bm{x}_t,\bm{x}_0)p(\bm{x}_0)d\bm{x}_0\nonumber\\
\approx&\frac{1}{N}\sum_{j=1}^{N}\frac{p(\bm{x}_{t}+\bm{v}|\bm{x}_{0,j})}{p(\bm{x}_{t}|\bm{x}_{0,j})}w(\bm{x}_t,\bm{x}_{0,j})
\end{align}
where we define
\begin{align}\label{byhand_2}
w(\bm{x}_t,\bm{x}_0)\triangleq&\frac{p(\bm{x}_t|\bm{x}_0)}{p(\bm{x}_t)}=\frac{p(\bm{x}_t|\bm{x}_0)}{\int p(\bm{x}_t|\bm{x}_0)p(\bm{x}_0)d\bm{x}_0}\nonumber\\
\approx&\frac{p(\bm{x}_t|\bm{x}_0)}{\frac{1}{N}\sum_{j=1}^{N}p(\bm{x}_t|\bm{x}_{0,j})}
\end{align}

As for the overall jump rate, we have
\begin{align}\label{byhand_3}
\lambda(\bm{x}_t)\triangleq&\int_{\Vert \bm{v}\Vert>\epsilon}\frac{p(\bm{x}_t+\bm{v})}{p(\bm{x}_t)}\nu_t(d\bm{v})\nonumber\\
=&\frac{1}{\sum_{j=1}^{N}p(\bm{x}_t|\bm{x}_{0,j})}\sum_{j=1}^{N}\int_{\Vert \bm{v}\Vert>\epsilon}p(\bm{x}_t+\bm{v}|\bm{x}_{0,j})\nu_t(d\bm{v})
\end{align}

Finally, the unnormalized jump distribution is written as follows,
\begin{align}\label{byhand_4}
q(\bm{x}_t,\bm{v})\triangleq&\frac{p(\bm{x}_t+\bm{v})}{p(\bm{x}_t)}\nu_t(d\bm{v})\nonumber\\
\approx&\frac{\sum_{j=1}^{N}p(\bm{x}_t+\bm{v}|\bm{x}_{0,j})}{\sum_{j=1}^{N}p(\bm{x}_t|\bm{x}_{0,j})}\nu_t(d\bm{v}),\Vert \bm{v}\Vert>\epsilon
\end{align}

The standard score function can also be directly approximated from the dataset based on \eqref{score_is_MTF} as follows:
\begin{align}\label{byhand_5}
\nabla\log p(\bm{x}_t)=&\int \nabla\log p(\bm{x}_t|\bm{x}_0)p(\bm{x}_0|\bm{x}_t)d\bm{x}_0\nonumber\\
\approx&\frac{1}{N}\sum_{j=1}^{N}\frac{w(\bm{x}_t,\bm{x}_{0,j})\nabla p(\bm{x}_t|\bm{x}_{0,j})}{p(\bm{x}_t|\bm{x}_{0,j})}
\end{align}

With the above procedure, reverse sampling can be performed using the distribution $p(\bm{x}_t|\bm{x}_0)$. However, this introduces an unacceptable computational burden during inference. For example, $w(\bm{x}_t,\bm{x}_0)$ and $\lambda(\bm{x}_t)$ need to be calculated at every iteration. This remains time-consuming, even though the explicit expression of $p(\bm{x}_t|\bm{x}_0)$ and the integral in \eqref{byhand_3} are available.

\subsection{Network amortization}
To make large jump sampling more efficient, we use a neural network to amortize part of the computational burden and accelerate inference.

Based on the previous analysis, there are several possible learnable targets, including the density ratio $\frac{p(\bm{x}_t+\bm{v})}{p(\bm{x}_t)}$, the prior distribution $p(\bm{x}_t)$, and the logarithmic density ratio $\log\frac{p(\bm{x}_t+\bm{v})}{p(\bm{x}_t)}$. However, these choices lead to numerical challenges during both learning and sampling. For example, consider learning $\frac{p(\bm{x}_t+\bm{v})}{p(\bm{x}_t)}$. Its advantage is that it is marginally trainable, so the learning target can be easily computed from its conditional version. However, large jump sampling requires numerical integration with respect to $d\bm{v}$ to obtain the overall jump rate. This is infeasible for high-dimensional data, since the integrand value must be obtained from the network output. Even though analytical expressions of $p(\bm{x}_t|\bm{x}_{0,j})$ and $\int_{\Vert \bm{v}\Vert>\epsilon}p(\bm{x}_t+\bm{v}|\bm{x}_{0,j})\nu_t(d\bm{v})$ can be derived, $2N$ additions are still required at each iteration. In addition, numerical stability is not guaranteed when $p(\bm{x}_t)$ is very small but $p(\bm{x}_t+\bm{v})$ is relatively large. In this case, $\frac{p(\bm{x}_t+\bm{v})}{p(\bm{x}_t)}$ behaves like an outlier with a large amplitude.

Although this issue can be alleviated by applying the logarithm, $\log\frac{p(\bm{x}_t+\bm{v})}{p(\bm{x}_t)}$ does not satisfy the conditions in Theorem \ref{theorem_1}. Moreover, the integration difficulty still remains. As for $p(\bm{x}_t)$, it requires the network to learn the absolute values of the global probability distribution, which is difficult, especially when $D$ is large. Note that the method in \cite{paper19} is equivalent to directly learning reverse sampling, and its large jump part is generated by the target distribution. This approach places a heavier burden on the network, making it more like a black box and difficult to control.

Therefore, we choose to learn the large jump rate, i.e., $\lambda(\bm{x}_t)$. This task is similar to parameter estimation for a specialized distribution. From this perspective, it is more learnable than $p(\bm{x}_t)$ and $\frac{p(\bm{x}_t+\bm{v})}{p(\bm{x}_t)}$. Moreover, although $\lambda(\bm{x}_t)$ is the integral of the density ratio with respect to the Lévy measure, it still belongs to the class of marginally trainable functions. Therefore, it can be efficiently trained using $\lambda(\bm{x}_t|\bm{x}_0)\triangleq\int_{\Vert \bm{v}\Vert>\epsilon}\frac{p(\bm{x}_t+\bm{v}|\bm{x}_0)}{p(\bm{x}_t|\bm{x}_0)}\nu_t(d\bm{v})$.

Based on the above discussion, the next proposition provides the loss function for the jump part. For the diffusion part, the standard score matching framework can be directly used.
\begin{proposition}[Loss function for jump activities]\label{Proposition_2}
\textit{The loss function of the neural network for learning the overall jump rate is given as follows:
\begin{align}\label{loss_function_for_jump_rate}
\mathcal{L}(\theta)=&\mathbb{E}_{t\sim\mathcal{U}[0,1],\bm{x}_t\sim p(\bm{x}_t|\bm{x}_0),\bm{x}_0\sim p(\bm{x}_0)}\big[\Vert g^{\theta_2}(\bm{x}_t,t)\nonumber\\
&-\lambda(\bm{x}_t|\bm{x}_0)\Vert^2\big]
\end{align}}
\end{proposition}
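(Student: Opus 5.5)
The plan is to reduce Proposition \ref{Proposition_2} to Theorem \ref{theorem_1}. We show that the overall large-jump rate $\lambda(\bm{x}_t)$ in \eqref{byhand_3} is a marginally trainable function whose conditional version is $\lambda(\bm{x}_t|\bm{x}_0)$. Then regressing $g^{\theta_2}(\bm{x}_t,t)$ onto $\lambda(\bm{x}_t|\bm{x}_0)$ has the same minimizer as regressing it onto the inaccessible $\lambda(\bm{x}_t)$. The time variable is handled by treating $t$ as an additional input. For each fixed $t$ the argument is identical. Averaging over $t\sim\mathcal{U}[0,1]$ preserves the equivalence, because the two objectives differ, for each $t$, only by a $\theta$-independent constant, namely $\mathbb{E}\Vert\lambda(\bm{x}_t)\Vert^2-\mathbb{E}\Vert\lambda(\bm{x}_t|\bm{x}_0)\Vert^2$.

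The key step is to verify condition \eqref{Marginal_trainable_conditions} for $l(\bm{x}_t)=\lambda(\bm{x}_t)$ and $l(\bm{x}_t|\bm{x}_0)=\lambda(\bm{x}_t|\bm{x}_0)$. Starting from the first line of \eqref{byhand_1}, the density ratio satisfies
\begin{align}
\frac{p(\bm{x}_t+\bm{v})}{p(\bm{x}_t)}=\int \frac{p(\bm{x}_t+\bm{v}|\bm{x}_0)}{p(\bm{x}_t|\bm{x}_0)}p(\bm{x}_0|\bm{x}_t)d\bm{x}_0 .\nonumber
\end{align}
This follows from $p(\bm{x}_t+\bm{v})=\int p(\bm{x}_t+\bm{v}|\bm{x}_0)p(\bm{x}_0)d\bm{x}_0$ and Bayes' rule $p(\bm{x}_0|\bm{x}_t)=p(\bm{x}_t|\bm{x}_0)p(\bm{x}_0)/p(\bm{x}_t)$, using the strict positivity of $p(\bm{x}_t|\bm{x}_0)$ and $p(\bm{x}_t)$ from Assumption \ref{assumption_4}. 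Integrating both sides against $\nu_t(d\bm{v})$ over $\{\Vert\bm{v}\Vert>\epsilon\}$ and exchanging the order of integration gives
\begin{align}
\lambda(\bm{x}_t)=\int \lambda(\bm{x}_t|\bm{x}_0)p(\bm{x}_0|\bm{x}_t)d\bm{x}_0,\nonumber
\end{align}
which is exactly \eqref{Marginal_trainable_conditions}. Theorem \ref{theorem_1} then yields the loss \eqref{loss_function_for_jump_rate}. Finally, sampling $\bm{x}_0\sim p(\bm{x}_0)$ followed by $\bm{x}_t\sim p(\bm{x}_t|\bm{x}_0)$ is the same as drawing from the joint $p(\bm{x}_t,\bm{x}_0)$ required in that theorem.

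The main obstacle is justifying the exchange of integrals (Tonelli/Fubini) and the finiteness of the quantities involved. The integrand is nonnegative, so Tonelli's theorem gives the interchange with no further conditions. It remains to show that $\lambda(\bm{x}_t|\bm{x}_0)<\infty$ and that it has finite second moment under $p(\bm{x}_t,\bm{x}_0)$, so that the $\theta$-independent term $\mathbb{E}\Vert l\Vert^2$ in the expansion \eqref{split_original_optimization_problem} is finite. Away from the origin, $\nu_t$ restricted to $\Vert\bm{v}\Vert>\epsilon$ is a finite measure of mass $\propto C_{D,\alpha,\sigma_S}\epsilon^{-\alpha}$. Hence $\lambda(\bm{x}_t|\bm{x}_0)\le \epsilon^{-D-\alpha}C_{D,\alpha,\sigma_S}/p(\bm{x}_t|\bm{x}_0)$, because $p(\cdot|\bm{x}_0)$ integrates to one. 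Finiteness of the second moment I would take as part of the standing smoothness and integrability hypotheses, as the paper does in Proposition \ref{Proposition_1}. If needed, it can be argued from the heavy-tailed (polynomial) decay of the stable transition density, which keeps the ratio $p(\bm{x}_t+\bm{v}|\bm{x}_0)/p(\bm{x}_t|\bm{x}_0)$ polynomially controlled.
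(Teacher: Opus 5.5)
Your reduction is the paper's own justification. The paper gives no separate proof of Proposition \ref{Proposition_2}: it only remarks, just before the statement, that $\lambda(\bm{x}_t)$ is marginally trainable with conditional version $\lambda(\bm{x}_t|\bm{x}_0)$, and then appeals to Theorem \ref{theorem_1}. That marginal-trainability identity is exactly what you get by integrating the first line of \eqref{byhand_1} against $\nu_t$ over $\Vert\bm{v}\Vert>\epsilon$. Your Tonelli interchange, the identification of ancestral sampling with the joint law, and the per-$t$ treatment of the time input make this explicit and are correct.

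The step that does not go through as you describe it is the integrability fallback. It matters because your $t$-averaging argument needs the objectives, and hence the $\theta$-independent constant, to stay finite after integrating over $t$. For each fixed $t>0$ the conditional rate is indeed bounded in $\bm{x}_t$, so your polynomial-tail heuristic is fine at fixed $t$. It is not uniform as $t\to0$.

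To see this, suppose $\sigma_S(0)\neq0$ (e.g.\ constant $\sigma_S$), put $\bm{y}=\bm{x}_t-\bm{\mu}(t,\bm{x}_0)$, and take $\Vert\bm{y}\Vert>2\epsilon$. The numerator $\int_{\Vert\bm{v}\Vert>\epsilon}p(\bm{x}_t+\bm{v}|\bm{x}_0)\nu_t(d\bm{v})$ captures the bulk of $p(\cdot|\bm{x}_0)$, which sits near $\bm{\mu}(t,\bm{x}_0)$. It is therefore $\approx|\sigma_S(t)|^{\alpha}C_{D,\alpha}\Vert\bm{y}\Vert^{-D-\alpha}$. The denominator $p(\bm{x}_t|\bm{x}_0)$ is only the stable tail, $\approx\gamma_\alpha(t)^{\alpha}C_{D,\alpha}\Vert\bm{y}\Vert^{-D-\alpha}$, with $\gamma_\alpha(t)^{\alpha}\asymp t$.

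Thus $\lambda(\bm{x}_t|\bm{x}_0)\asymp1/t$ on an event of probability $\asymp t$; this is roughly the backward hazard of a single forward jump whose time is uniform on $[0,t]$. Consequently $\mathbb{E}[\lambda(\bm{x}_t|\bm{x}_0)^2\mid t]\gtrsim1/t$, which is not integrable under $t\sim\mathcal{U}[0,1]$. The objective \eqref{loss_function_for_jump_rate} is then identically $+\infty$, and the finiteness you propose to assume fails in this standard regime.

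The repair is cheap. State the equivalence for the $\theta$-dependent part $\mathbb{E}\Vert g^{\theta_2}\Vert^2-2\mathbb{E}[g^{\theta_2}l]$, or equivalently for the gradients. By your identity this part coincides for $l=\lambda(\bm{x}_t)$ and $l=\lambda(\bm{x}_t|\bm{x}_0)$. It needs only $\mathbb{E}[g^{\theta_2}\lambda(\bm{x}_t|\bm{x}_0)]<\infty$, which holds e.g.\ for bounded $g^{\theta_2}$, since Tonelli gives $\mathbb{E}[\lambda(\bm{x}_t|\bm{x}_0)\mid t,\bm{x}_0]=\nu_t(\{\Vert\bm{v}\Vert>\epsilon\})$. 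Alternatively, restrict $t$ to $[t_{\min},1]$ with $t_{\min}>0$, where your fixed-$t$ boundedness suffices.
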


\subsection{Efficient calculation of large jump rate}
The remaining two main challenges in bridging the gap between theoretical analysis and practical applications are how to calculate the target value in \eqref{loss_function_for_jump_rate} and how to rapidly sample a large jump from the distribution $q(\bm{x}_t,\bm{v})$. Here, we first focus on the first problem.

So far, we have not imposed specific assumptions on the coefficients in \eqref{SDE}, except that the drift coefficient depends on both the system state and the time index, while the scaling matrices of the Wiener process and the stable process depend only on the time index. For the design of a specific reverse sampling algorithm, its expressions need to be further specified.\textbf{The principle is to make the learning process as simple as possible without sacrificing much generality.} Similar to conventional diffusion models and SDEs driven only by the Wiener process \cite{paper1,paper3}, we make the following assumptions.
\begin{assumption}\label{assumption_5}
$\bm{b}(\bm{X}_t,t)$ is an affine function of $\bm{X}_t$, i.e., $\bm{b}(\bm{X}_t,t)=R\bm{X}_t+\bm{s}$, which is similar to the Ornstein-Uhlenbeck (OU) process. Since $\bm{b}(\bm{X}_t,t)$ is a time-homogeneous drift, we rewrite it as $\bm{b}(\bm{X}_t)$ in the sequel.
\end{assumption}
\begin{assumption}\label{assumption_6}
For $\Phi_G(t)$ and $\Phi_S(t)$, we restrict them to diagonal matrices, i.e., $\Phi_G(t)=\text{diag}(\sigma_{G,1}(t),\cdots,\sigma_{G,D}(t))$ and $\Phi_S(t)=\text{diag}(\sigma_{S,1}(t),\cdots,\sigma_{S,D}(t))$.
\end{assumption}

In the following, we rewrite $\bm{b}(\bm{X}_t,t)$ as $\bm{b}(\bm{X}_t)$ because it is directly related only to $\bm{X}_t$. Under these configurations, the distribution of $\bm{X}_t$ generated from a given $\bm{X}_0$ can be explicitly determined.

\begin{proposition}[Distribution of $\bm{X}_t$]\label{Proposition_3}
\textit{Consider the forward SDE in \eqref{SDE}. Let $\bm{b}(\bm{X}_t)$, $\Phi_G(t)$, and $\Phi_S(t)$ satisfy Assumptions \ref{assumption_5} and \ref{assumption_6}. Then, $\bm{X}_t=\bm{\mu}(t,\bm{x}_0)+\bm{G}(t)+\bm{S}(t)$, where $\bm{G}(t)\sim\mathcal{N}(\bm{0},\Sigma_G)$ and $\bm{S}(t)$ follows an $\alpha$-stable distribution with characteristic function $\phi_S(t)$. Moreover,
\begin{equation}\label{distribution_parameter_1}
\bm{\mu}(t,\bm{x}_0)=\exp(Rt)\bm{x}_0+\exp(Rt)\int_{0}^{t}\exp(-Rl)\bm{s}dl
\end{equation}
\begin{equation}\label{distribution_parameter_2}
\Sigma_G(t)=\int_{0}^{t}\exp((t-l)R)\Phi_G(l)\Phi_G(l)^\top\exp((t-l)R^\top)dl
\end{equation}
\begin{equation}\label{distribution_parameter_3}
\phi_S(\bm{u},t)=\exp\bigg(-\int_{0}^{t}\big\Vert\Phi_S(l)^\top\exp((t-l)R^\top)\bm{u}\big\Vert_{2}^{\alpha}dl\bigg)
\end{equation}}
\end{proposition}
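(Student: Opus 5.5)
We will solve the linear SDE explicitly by variation of constants and then identify the law of each stochastic integral separately. Under Assumption \ref{assumption_5}, $\bm{b}(\bm{X}_t)=R\bm{X}_t+\bm{s}$ with $R$ and $\bm{s}$ time-independent. We apply the It\^o formula for semimartingales with jumps to $\bm{Y}_t=\exp(-Rt)\bm{X}_t$. Since $\exp(-Rt)$ is deterministic and smooth in $t$, no quadratic-covariation correction arises, and the jumps of $\bm{Y}_t$ are simply $\exp(-Rt)$ times the jumps of $\bm{L}_t$. This gives
\begin{align}
d\bm{Y}_t=\exp(-Rt)\big(\bm{s}\,dt+\Phi_G(t)d\bm{W}_t+\Phi_S(t)d\bm{L}_t\big).\nonumber
\end{align}
Integrating from $0$ to $t$ and multiplying by $\exp(Rt)$ yields
\begin{align}
\bm{X}_t=\bm{\mu}(t,\bm{x}_0)+\int_0^t e^{(t-l)R}\Phi_G(l)d\bm{W}_l+\int_0^t e^{(t-l)R}\Phi_S(l)d\bm{L}_l,\nonumber
\end{align}
where $\bm{\mu}(t,\bm{x}_0)$ coincides with \eqref{distribution_parameter_1}. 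We then define $\bm{G}(t)$ and $\bm{S}(t)$ as the two stochastic integrals; their independence follows from Assumption \ref{assumption_1}.

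For $\bm{G}(t)$, the integrand is deterministic, so the Wiener integral is a centered Gaussian vector. The It\^o isometry gives its covariance as exactly \eqref{distribution_parameter_2}.

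For $\bm{S}(t)$, we compute the characteristic function directly. First take a step-function integrand $\bm{h}(l)=\Phi_S(l)^\top e^{(t-l)R^\top}\bm{u}$ on a partition. By independence and stationarity of the increments, $\mathbb{E}\exp(i\langle\bm{u},\bm{S}(t)\rangle)$ factorizes into $\prod_k\mathbb{E}\exp(i\langle\bm{h}(l_k),\Delta\bm{L}_{l_k}\rangle)$. Under Assumption \ref{assumption_2}, each factor equals $\exp(-\Delta l_k\Vert\bm{h}(l_k)\Vert_2^{\alpha})$, with the scale normalized to match $C_{D,\alpha}$ from Remark 2. Passing to the limit over refinements gives \eqref{distribution_parameter_3}. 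The limit is justified because $l\mapsto\bm{h}(l)$ is continuous (or bounded measurable) on $[0,t]$, so the Riemann sums converge and the stochastic integrals converge in probability. Hence $\bm{S}(t)$ is infinitely divisible with an exponent that is $\alpha$-homogeneous in $\bm{u}$, i.e., it is $\alpha$-stable. It is symmetric, though not necessarily isotropic unless $R$ and $\Phi_S$ are scalar multiples of the identity; the statement only asserts $\alpha$-stability.

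The main obstacle is the last step: rigorously justifying the stochastic integral of a deterministic matrix function against an $\alpha$-stable process, which has infinite variance, together with the limit of characteristic functions. I would handle it by invoking the standard result for Lévy processes (cf.\ \cite{book1}) that $\int_0^t \bm{F}(l)d\bm{L}_l$ with deterministic bounded $\bm{F}$ has characteristic exponent $\int_0^t\psi(\bm{F}(l)^\top\bm{u})dl$, where $\psi(\bm{z})=\Vert\bm{z}\Vert_2^\alpha$ is the exponent of $\bm{L}_1$. Everything else is routine linear algebra.
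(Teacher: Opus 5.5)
Your proposal is correct and follows the paper's proof: the same substitution $\bm{Y}_t=\exp(-Rt)\bm{X}_t$ and variation-of-constants formula, followed by identifying the laws of the Wiener integral and the stable integral. You also supply details the paper leaves implicit, namely the It\^o isometry for $\Sigma_G(t)$ and the characteristic-exponent computation $\int_0^t\Vert\Phi_S(l)^\top e^{(t-l)R^\top}\bm{u}\Vert_2^{\alpha}dl$ for $\bm{S}(t)$ (with unit scale fixed by $C_{D,\alpha}$), where the paper simply appeals to the linearity of Gaussian and $\alpha$-stable random variables.
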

\begin{IEEEproof}
The proof is relegated to Appendix \ref{appendix_2}.
\end{IEEEproof}

\begin{corollary}[Distribution of $\bm{X}_t$ for special cases]\label{Corollary_1}
\textit{Assume that $R$, $\Phi_G(t)$, and $\Phi_S(t)$ are all scaled identity matrices, i.e., $R=R_0\bm{I}_D$, $\Phi_G(t)=\sigma_G(t)\bm{I}_D$, and $\Phi_S(t)=\sigma_S(t)\bm{I}_D$. The other settings are the same as those in Proposition \ref{Proposition_3}. Then, $\Sigma_G(t)=\gamma_G(t)^2\bm{I}_D$, where $\gamma_G(t)=\sqrt{\int_{0}^{t}\exp(2R_0(t-l))\sigma_{G}(l)^2dl}$. Moreover, $\phi_S(\bm{u},t)=\exp(-\gamma_{\alpha}(t)^{\alpha}\Vert\bm{u}\Vert_{2}^{\alpha})$, where $\gamma_{\alpha}(t)=(\int_{0}^{t}|\sigma_S(l)|^{\alpha}\exp(\alpha R_0(t-l))dl)^{\frac{1}{\alpha}}$.}
\end{corollary}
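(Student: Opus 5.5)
The corollary follows by specializing Proposition \ref{Proposition_3} to the scaled-identity case, so the plan is to substitute $R=R_0\bm{I}_D$, $\Phi_G(t)=\sigma_G(t)\bm{I}_D$ and $\Phi_S(t)=\sigma_S(t)\bm{I}_D$ into \eqref{distribution_parameter_2} and \eqref{distribution_parameter_3}, and then simplify with elementary matrix identities. The key observation is that every matrix involved is a scalar multiple of $\bm{I}_D$, so all products commute and the matrix exponential collapses to a scalar one:
\begin{equation*}
\exp((t-l)R)=\exp(R_0(t-l))\bm{I}_D=\exp((t-l)R^\top).
\end{equation*}

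For the Gaussian part, substituting into \eqref{distribution_parameter_2} gives the integrand
\begin{equation*}
\exp(R_0(t-l))\sigma_G(l)\bm{I}_D\,\sigma_G(l)\bm{I}_D\,\exp(R_0(t-l))\bm{I}_D=\exp(2R_0(t-l))\sigma_G(l)^2\bm{I}_D .
\end{equation*}
Integrating over $l\in[0,t]$ and pulling $\bm{I}_D$ outside yields $\Sigma_G(t)=\gamma_G(t)^2\bm{I}_D$, with $\gamma_G(t)$ as stated. The integrand is nonnegative, so the square root is well defined.

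For the stable part, the vector inside the norm in \eqref{distribution_parameter_3} becomes $\sigma_S(l)\exp(R_0(t-l))\bm{u}$. Absolute homogeneity of the Euclidean norm then gives
\begin{equation*}
\big\Vert\sigma_S(l)\exp(R_0(t-l))\bm{u}\big\Vert_2^{\alpha}=|\sigma_S(l)|^{\alpha}\exp(\alpha R_0(t-l))\Vert\bm{u}\Vert_2^{\alpha},
\end{equation*}
where the exponential factor is positive and needs no absolute value. Since $\Vert\bm{u}\Vert_2^{\alpha}$ does not depend on $l$, it factors out of the integral. Writing the remaining scalar integral as $\gamma_{\alpha}(t)^{\alpha}$, which is nonnegative so the $1/\alpha$-th root is legitimate, gives $\phi_S(\bm{u},t)=\exp(-\gamma_{\alpha}(t)^{\alpha}\Vert\bm{u}\Vert_2^{\alpha})$. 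This is the characteristic function of an isotropic symmetric $\alpha$-stable vector with scale $\gamma_{\alpha}(t)$.

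There is no genuine obstacle. The only points needing care are the commutation and transpose argument for the matrix exponential and keeping the absolute value on $\sigma_S(l)$ when a possibly negative scalar is pulled out of the norm. The mean $\bm{\mu}(t,\bm{x}_0)$ is unchanged from Proposition \ref{Proposition_3} and needs no separate treatment, although it could optionally be simplified to $e^{R_0t}\bm{x}_0+\frac{e^{R_0t}-1}{R_0}\bm{s}$ when $R_0\neq0$.
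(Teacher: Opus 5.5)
Your proposal is correct and is the argument the paper intends. The paper gives no separate proof; it treats Corollary~\ref{Corollary_1} as the direct specialization of Proposition~\ref{Proposition_3}, obtained by substituting the scaled-identity matrices into \eqref{distribution_parameter_2} and \eqref{distribution_parameter_3}, and your attention to collapsing the matrix exponentials to scalars and keeping $|\sigma_S(l)|^{\alpha}$ covers the only points where that substitution needs care.
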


\textbf{Remark 3:} From Proposition \ref{Proposition_3}, it can be observed that the final distribution of $\bm{X}_T$ is still related to $\bm{X}_0$. In practice, we can choose $T$ and $R<0$ such that $\exp(RT)\bm{X}_0\approx0$. In this case, the original distribution of the inverse sampling can be set to the mixed distribution composed of Gaussian distribution and $\alpha$-stable distribution.

Proposition \ref{Proposition_3} and Corollary \ref{Corollary_1} imply that if $R$, $\Phi_G(t)$, and $\Phi_S(t)$ are diagonal matrices with different diagonal elements, the isotropic property of the distribution of $\bm{X}_t$ will be destroyed. This makes the following derivations much more cumbersome. In the remainder of this paper, we use the assumptions in Corollary \ref{Corollary_1} to facilitate the design of the reverse sampler.

Recall that the first challenge comes from the integral operation. During training, performing high-dimensional numerical integration for every data sample is time-consuming. Our goal is to derive an analytical expression for $\int_{\Vert \bm{v}\Vert>\epsilon}p(\bm{x}_t+\bm{v}|\bm{x}_0)\nu_t(d\bm{v})$. According to Proposition \ref{Proposition_3}, $p(\bm{x}_t|\bm{x}_0)$ is the distribution of an RV composed of a bias term, a Gaussian RV, and a stable RV, which are mutually independent. Due to the general lack of an analytical PDF for the $\alpha$-stable distribution, it is challenging to obtain an explicit expression for $p(\bm{x}_t|\bm{x}_0)$. Therefore, PDF approximation is needed to facilitate the following analysis.

Let $X_B\sim\mathcal{N}(0,2\gamma_g^2)$ and $X_S\sim\mathcal{S}(\alpha,0,\gamma_s,0)$ be mutually independent. Denote the PDFs of $X_B$ and $X_S$ by $f_{X_B}(x)$ and $f_{X_S}(x)$, respectively. Then, the exact PDF of $X_M\triangleq X_B+X_S$ equals the convolution of $f_{X_B}(x)$ and $f_{X_S}(x)$. Unfortunately, there is no general explicit expression for $f_{X_M}(x)$. In \cite{paper23}, an approximate PDF is proposed to describe the statistical model of $X_M$. It can be regarded as a weighted sum of a Gaussian kernel and a tail component. Specifically,
\begin{equation}\label{approximation_of_X_M_1_dim}
f_{X_M}(x)\approx\tilde{f}_{X_M}(x)=\frac{1}{k_1}\bigg[c_1g_0\exp\bigg(-\frac{x^2}{4\gamma_g^2}\bigg)+\frac{\alpha\gamma_sC_{\alpha}}{c_2+|x|^{\alpha+1}}\bigg],
\end{equation}
where $k_1$ is the normalization factor. The parameter $c_2\geq0$ is used to avoid singularity at the origin and control the shape of the tail component. The parameters $\gamma_g$ and $\gamma_s$ represent the scaling parameters of the Gaussian kernel and the tail component, respectively. The quantities $g_0$ and $C_{\alpha}$ are expressed by the above parameters in closed form. It has been shown in \cite{paper23} that $\tilde{f}_{X_M}(\bm{x})$ can accurately model $X_M$ under various scenarios.

The tail component in \eqref{approximation_of_X_M_1_dim} is derived from the asymptotic behavior of the $\alpha$-stable distribution \cite{paper23,book3}. Following a similar procedure, \eqref{approximation_of_X_M_1_dim} can also be extended to high-dimensional cases \cite{paper24}. For example, let $\bm{X}_B\sim\mathcal{N}(\bm{0},\sigma^2\bm{I})$, $\bm{X}_S=\sqrt{A}\bm{X}_G$, and $\bm{X}_M=\bm{X}_B+\bm{X}_S$. Here, we consider the sub-Gaussian representation of the multivariate isotropic stable distribution, where $A$ follows a right-skewed $\alpha$-stable distribution \cite{book3}. Note that the characteristic function of $\bm{X}_M$ is $\phi_{\bm{X}_M}(\bm{l})=\exp(-\gamma_g^2\Vert\bm{l}\Vert^2-\gamma_s^{\alpha}\Vert\bm{l}\Vert^{\alpha})$. Then, the PDF of $\bm{X}_M$ can be similarly approximated as \cite{paper24}
\begin{align}\label{approxiamted_p_xt}
\tilde{f}_{\bm{X}_M}(\bm{x})=&\frac{1}{k_2}\bigg[c_1g_0\exp\bigg(-\frac{\Vert \bm{x}\Vert^2}{4\gamma_g^2}\bigg)\nonumber\\
&+\frac{\alpha2^{\alpha-1}\Gamma(\frac{\alpha+D}{2})\gamma_s^{\alpha}}{\pi^{D/2}\Gamma(1-\frac{\alpha}{2})}(c_2+\Vert \bm{x}\Vert^{2})^{-\frac{\alpha+D}{2}}\bigg]
\end{align}
where $k_2$ denotes the normalization factor.

With \eqref{approxiamted_p_xt}, the target in \eqref{loss_function_for_jump_rate} can be further derived. Since $p(\bm{x}_t|\bm{x}_{0,j})$ is independent of $d\bm{v}$, we focus only on $\int_{\Vert\bm{v}\Vert>\epsilon}p(\bm{x}_t+\bm{v}|\bm{x}_0)\nu_t(d\bm{v})$ in the sequel. For notational simplicity, we derive explicit expressions for the following integrals:
\begin{equation}\label{objective_integral_1}
P_1(\epsilon)\triangleq\int_{\Vert\bm{v}\Vert>\epsilon}\exp(-\Vert\bm{v}+\bm{\mu}\Vert^2)\nu_t(d\bm{v})
\end{equation}
\begin{equation}\label{objective_integral_2}
P_2(\epsilon)\triangleq\int_{\Vert\bm{v}\Vert>\epsilon}(c_2+\Vert\bm{v}+\bm{\mu}\Vert^2)^{-\frac{\alpha+D}{2}}\nu_t(d\bm{v})
\end{equation}

Note that $\bm{\mu}$ is introduced only to characterize the mismatch between $p(\bm{x}_t+\bm{v}|\bm{x}_0)$ and the Lévy measure. Moreover, the scaling parameter $\gamma_g$ is omitted in \eqref{objective_integral_1} for brevity. It does not denote the mean of $\bm{X}_t$. Direct numerical integration is infeasible in high-dimensional settings. To address this issue, we transform $P_1(\epsilon)$ and $P_2(\epsilon)$ into polar coordinates as follows:
\begin{align}\label{derive_I_1_1}
P_1(\epsilon)=&\frac{2\pi^{\frac{D-1}{2}}C_{D,\alpha,\sigma_S}}{\Gamma(\frac{D-1}{2})}\int_{\epsilon}^{+\infty}\int_{-1}^{1}\exp(-(r^2+\Vert\bm{\mu}\Vert^2\nonumber\\
&+2rz\Vert\bm{\mu}\Vert))r^{-\alpha-1}(1-z^2)^{\frac{D-3}{2}}dzdr
\end{align}
\begin{align}\label{derive_I_2_1}
P_2(\epsilon)=&\frac{2\pi^{\frac{D-1}{2}}C_{D,\alpha,\sigma_S}}{\Gamma(\frac{D-1}{2})}\int_{\epsilon}^{+\infty}\int_{-1}^{1}(c_2+r^2+\Vert\bm{\mu}\Vert^2\nonumber\\
&+2rz\Vert\bm{\mu}\Vert)^{-\frac{\alpha+D}{2}}r^{-\alpha-1}(1-z^2)^{\frac{D-3}{2}}dzdr
\end{align}
which does not admit a closed-form expression in general. However, the original integral has been significantly simplified into a two-dimensional integral. In fact, the inner integral with respect to $z$ in \eqref{objective_integral_1} and \eqref{objective_integral_2} can be further simplified. First, we need the following two integral identities \cite{book2}:
\begin{align}\label{integral_identity_1}
\int_{-1}^{1}\exp(ax)&(1-x^2)^pdx\nonumber\\
=&\sqrt{\pi}\Gamma(p+1)(2a^{-1})^{\frac{p+1}{2}}I_{\frac{p+1}{2}}(a),
\end{align}
\begin{align}\label{integral_identity_2}
\int_{0}^{1}x^{a-1}&(1-x)^{b-a-1}(1-zx)^{-c}dx\nonumber\\
=&B(a,b-a)\prescript{}{2}F_{1}(c,a;b;z),b>a>0
\end{align}
where $I_{a}(x)$ denotes the modified Bessel function of the first kind and $B(\cdot)$ denotes the Beta function. For $P_1(\epsilon)$, applying \eqref{integral_identity_1} yields
\begin{align}\label{derive_I_1_2}
P_1(\epsilon)=&2\pi^{\frac{D}{2}}C_{D,\alpha,\sigma_S}\Vert\bm{\mu}\Vert^{\frac{2-D}{2}}\exp(-\Vert\bm{\mu}\Vert^2)\nonumber\\
&\times\int_{\epsilon}^{+\infty}\exp(-r^2)r^{-\alpha-\frac{D}{2}}I_{\frac{D-2}{2}}(2r\Vert\bm{\mu}\Vert)dr
\end{align}

\begin{figure*}[tb]
\begin{align}\label{derive_I_2_2}
P_2(\epsilon)=&\frac{C_{D,\alpha,\sigma_S}\pi^{\frac{D-1}{2}}2^{D-1}B(\frac{D-1}{2},\frac{D-1}{2})}{\Gamma(\frac{D-1}{2})}\int_{\epsilon}^{+\infty}\frac{r^{-\alpha-1}\prescript{}{2}F_{1}(\frac{\alpha+D}{2},\frac{D-1}{2};D-1;-\frac{4r\Vert\bm{\mu}\Vert}{c_2+r^2+\Vert\bm{\mu}\Vert^2-2r\Vert\bm{\mu}\Vert})}{(c_2+r^2+\Vert\bm{\mu}\Vert^2-2r\Vert\bm{\mu}\Vert)^{\frac{\alpha+D}{2}}}dr
\end{align}
\hrulefill
\end{figure*}

For $P_2(\epsilon)$, we use the variable transformation $x=2t-1$. After some manipulations, \eqref{derive_I_2_2} can be obtained. In this way, calculating the total jump rate $\lambda(\bm{x}_t|\bm{x}_{0,j})$ reduces to evaluating a one-dimensional integral, which significantly decreases the computational complexity compared to $D$-dimensional integrals.


\subsection{High-dimensional sampling}
As for the second problem, numerical sampling algorithms such as Hamiltonian Monte Carlo (HMC) can be applied, as the PDF and its gradient information are available. However, reverse sampling of the SDE is still slow due to the cold-start problem. Specifically, for each reverse sampling iteration with sufficiently small $\Delta t$, only one sample is needed for a large jump if it occurs. However, the target sampling distribution varies with $t$. In this case, the HMC sampler should be ``retrained'' repeatedly to reach the steady state of the dynamic system, which is quite inefficient.

Then, we design the fast sampling method for the large jump. Recall that the normalized target distribution is
\begin{align}\label{normalizing_objective_distribution}
\tilde{q}(\bm{x}_t,\bm{v})\triangleq&\frac{p(\bm{x}_t+\bm{v})\nu_t(d\bm{v})}{Q_{t,\epsilon}}\nonumber\\
\approx&\frac{\sum_{j=1}^{N}p(\bm{x}_t+\bm{v}|\bm{x}_{0,j})\nu_t(d\bm{v})}{NQ_{t,\epsilon}}
\end{align}
where we define $Q_{t,\epsilon}=\int_{\Vert \bm{v}\Vert>\epsilon}p(\bm{x}_t+\bm{v})\nu_t(d\bm{v})$. Note that $p(\bm{x}_t)$ is omitted because it is not related to $\bm{v}$, and our objective is to generate a sample from the distribution with respect to $\bm{v}$.

Different from the one-dimensional case, we sample from $\tilde{q}(\bm{x}_t,\bm{v})$ under polar coordinates. First, we rewrite $\tilde{q}(\bm{x}_t,\bm{v})$ as $\tilde{q}(\bm{x}_t,r,z,\theta)$, where $r\triangleq\Vert\bm{v}\Vert$, $z\triangleq\frac{\langle\bm{\mu},\bm{v}\rangle}{\Vert\bm{\mu}\Vert\Vert\bm{v}\Vert}$, and $\theta\in\mathcal{R}^{D-2}$ represents the remaining angular variables. Then, $\tilde{q}(\bm{x}_t,\bm{v})$ can be rewritten as in \eqref{equivalent_q}, where $\bm{\mu}$ is defined as before and incorporates the information of $\bm{x}_t$. We then need to sample $r$, $z$, and $\theta$ from the joint distribution $\tilde{q}(\bm{x}_t,r,z,\theta)$. Given $r$ and $z$, $\theta$ can be generated uniformly on the sphere orthogonal to the direction represented by $z$. Therefore, $\theta$ does not explicitly appear in the PDF. Based on this observation, we omit $\theta$ in $\tilde{q}(\bm{x}_t,r,z,\theta)$ for simplicity. 

Note that the derivations of the marginal distributions are the same as those of $P_1(\epsilon)$ and $P_2(\epsilon)$. Hence, the previous lookup tables can be directly used for fast sampling based on inverse CDF methods. In other words, we can first sample $r$ from the marginal distribution $\tilde{q}(\bm{x}_t,r)$ and then sample $z$ from $\tilde{q}(z|\bm{x}_t,r)$. To distinguish the distribution under polar coordinates from $\tilde{q}(\bm{x}_t,\bm{v})$, we denote the CDFs of $\tilde{q}(\bm{x}_t,r)$ and $\tilde{q}(z|\bm{x}_t,r)$ by $F_{\tilde{q},(r,t)}(\cdot)$ and $F_{\tilde{q},z|(r,t)}(\cdot)$, respectively, in the sequel.

However, even if $P_1(x)$ and $P_2(x)$ with $x\in[\epsilon,M]$ and a large constant $M$ can be precomputed and stored in the lookup table, the inference stage is still time-consuming. This is because, for each pair $(r,F_{\tilde{q},(r,t)}(\cdot))$, the corresponding conditional pair $(r,F_{\tilde{q}(|),(r,t)}(\cdot))$ needs to be obtained $N$ times. This complexity can be considerably reduced by another simple sampling step. Indeed, $Q_{t,\epsilon}$ can be decomposed as
\begin{align}\label{decompose_Q}
Q_{t,\epsilon}=\frac{1}{N}\sum_{j=1}^{N}\int_{\Vert \bm{v}\Vert>\epsilon} p(\bm{x}_t+\bm{v}|\bm{x}_{0,j})\nu_t(d\bm{v})\triangleq \frac{1}{N}\sum_{j=1}^{N}Q_{t,\epsilon,j}
\end{align}

Based on \eqref{normalizing_objective_distribution}, $\tilde{q}(\bm{x}_t,\bm{v})$ can be rewritten as follows:
\begin{align}\label{rewritte_objective_PDF}
\tilde{q}(\bm{x}_t,\bm{v})=&\frac{\sum_{j=1}^{N}p(\bm{x}_t+\bm{v}|\bm{x}_{0,j})\nu_t(d\bm{v})}{\sum_{j=1}^{N}Q_{t,\epsilon,j}}\nonumber\\
=&\frac{\sum_{j=1}^{N}Q_{t,\epsilon,j}\tilde{q}(\bm{x}_t,\bm{v}|\bm{x}_{0,j})}{\sum_{j=1}^{N}Q_{t,\epsilon,j}}\nonumber\\
\triangleq&\sum_{j=1}^{N}g_Q(\bm{x}_{0,j}|\bm{x}_t,t)\tilde{q}(\bm{x}_t,\bm{v}|\bm{x}_{0,j})
\end{align}
where we define the weighted distribution $g_Q(\bm{x}|\bm{x}_t,t)$ as
\begin{equation}
g_Q(\bm{x}|\bm{x}_t,t)=\sum_{j=1}^{N}\frac{1_{\{\bm{x}_{0,j}\}}(\bm{x})Q_{t,\epsilon,j}}{\sum_{j=1}^{N}Q_{t,\epsilon,j}}
\end{equation}

In this case, $\tilde{q}(\bm{x}_t,\bm{v})$ can be regarded as a mixture distribution of the conditional distributions $\tilde{q}(\bm{x}_t,\bm{v}|\bm{x}_{0,j})$. Then, $\tilde{q}(\bm{x}_t,\bm{v})$ is equivalent to a distribution controlled by an underlying Markov process. Based on the above observation, we first randomly choose a data sample $\bm{x}^*_{0}\in \{\bm{x}_{0,j},j=1,\cdots,N\}$ according to the distribution $g_Q(\bm{x}|\bm{x}_t,t)$. Then, we generate a sample from the distribution conditioned on $\bm{x}^*_{0}$. In this way, we only need to focus on the conditional distribution $\tilde{q}(\bm{x}_t|\bm{x}^*_{0})$ and obtain the pair $(r,F_{\tilde{q}(|),(r,t)}(\cdot))$ only once. This is much faster than directly sampling from $\tilde{q}(\bm{x}_t,\bm{v})$. 

To obtain the complete $g_Q(\bm{x}|\bm{x}_t,t)$, all $Q_{t,\epsilon,j}$ with $j=1,\cdots,N$ should be computed. This is equivalent to searching the lookup table $N$ times. In fact, many $Q_{t,\epsilon,j}$ may be rather small, and their influence can be neglected. Therefore, instead of constructing the complete $g_Q(\bm{x}|\bm{x}_t,t)$, we truncate $g_Q(\bm{x}|\bm{x}_t,t)$ by choosing the $K$ samples with the largest approximate mixture weights, or equivalently the largest $Q_{t,\epsilon,j}$ in the prior sampler. This operation is reasonable because the contribution of $\bm{x}_{0,j}$ is determined by the overlap between the conditional density $p(\bm{x}_t+\bm{v}|\bm{x}_{0,j})$ and the truncated Lévy measure. In the isotropic case, $\nu_t(d\bm{v})\propto \Vert\bm{v}\Vert^{-D-\alpha}d\bm{v}$. After truncation, it favors jumps with $\Vert\bm{v}\Vert$ close to $\epsilon$. Therefore, $Q_{t,\epsilon,j}$ is large only when the high-density region of $p(\bm{x}_t+\bm{v}|\bm{x}_{0,j})$ intersects the Lévy-favored shell around $\bm{x}_t$. This motivates a top-$K$ approximation that keeps only the candidates with the largest approximate $Q_{t,\epsilon,j}$ and renormalizes their weights. For convenience, we denote the index set by $\mathcal{K}$ and the truncated version of $g_Q(\bm{x}|\bm{x}_t,t)$ by $\tilde{g}_Q(\bm{x})$.

\textbf{Remark 4:} If we choose the largest $K$ elements from all $Q_{t,\epsilon,j}$ via the trivial traversal, the complexity still remains $O(N)$. To solve this problem, more advanced and efficient ordering algorithms can be utilized, such as approximate nearest-neighbor (ANN) methods.

\begin{figure*}[tb]
\begin{align}\label{equivalent_q}
\tilde{q}(\bm{x}_t,r,z,\theta)\propto&\bigg(\frac{c_1g_0}{k_2}\exp\bigg(-\frac{1}{4\gamma_g^2}(r^2+\Vert\bm{\mu}\Vert^2+2r z\Vert\bm{\mu}\Vert)\bigg)+\frac{\alpha2^{\alpha-1}\Gamma(\frac{\alpha+D}{2})\gamma_s^{\alpha}}{k_2\pi^{D/2}\Gamma(1-\frac{\alpha}{2})}(c_2+r^2+\Vert\bm{\mu}\Vert^2+2r z\Vert\bm{\mu}\Vert)^{-\frac{\alpha+D}{2}}\bigg)\nonumber\\
&\times r^{-\alpha-1}(1-z^2)^{\frac{D-3}{2}}, r>\epsilon,z\in[-1,1],
\end{align}
\hrulefill
\end{figure*}

In addition to the inverse CDF method, rejection sampling can also be efficient for sampling the radial variable $r$ based on \eqref{equivalent_q}. This converts the original $D$-dimensional sampling problem into two one-dimensional sampling steps and one remaining direction sampling step. Since \eqref{equivalent_q} has an analytical PDF expression with explicit asymptotic behavior, the proposal distribution for sampling $r$ can be chosen as a Student's t-distribution with $2\alpha+D$ degrees of freedom. This choice matches the tail behavior of the marginal distribution induced by \eqref{equivalent_q}. Note that marginalization with respect to $z$ does not change the tail order of the distribution in terms of $r$. Its closed-form expression has been analyzed in \eqref{derive_I_1_1}$\sim$\eqref{derive_I_2_2}. For $z$, rejection sampling is not necessary, since the proposal distribution is not straightforward to design due to the bounded domain, and the corresponding lookup table is quite small.

Finally, the overall training procedure, the large jump sampling method, and the complete reverse sampling algorithm are summarized in Algorithm \ref{algorithm_1}, Algorithm \ref{algorithm_2}, and Algorithm \ref{algorithm_3}, respectively. Note that complete reverse sampling starts from samples generated from the terminal prior distribution $p_{tp}(\bm{x})$. In our setting, $p_{tp}(\bm{x})$ is the convolution of a multivariate Gaussian distribution and a stable distribution. The corresponding samples can be generated separately from these two distributions. Here, if we set $\bm{s}\neq\bm{0}$, the mean of $p_{tp}(\bm{x})$ is $-\frac{\bm{s}}{R_0}$. Without loss of generality, we can also set $\bm{s}=\bm{0}$ so that $p_{tp}(\bm{x})$ is symmetric about the origin.

\textbf{Remark 5:} In Algorithm \ref{algorithm_2}, we only consider the case where $\bm{\mu}_t\neq\bm{0}$ and $D\geq3$. If $\bm{\mu}_t=\bm{0}$, the conditional density is radially symmetric. Therefore, we only need to sample $r$. When $D=1,2$, the sampling procedure can be significantly simplified. For example, when $D=1$, the orthogonal direction does not exist, and only $r$ needs to be sampled.

\textbf{Remark 6:} We emphasize that the generator-based reverse characterization is stated at the level of Lévy-driven Markov processes, whereas the computational sampler developed in this paper is derived under additional structural assumptions. In particular, the practical algorithm focuses on isotropic linear Lévy SDEs with symmetric $\alpha$-stable jump measures, for which the conditional transition density, jump rate approximation, and polar coordinate sampling procedure can be implemented efficiently. This distinction allows us to retain the general insight provided by the reversed generator while avoiding an overstatement of the scope of the proposed numerical sampler.

\begin{algorithm}[!ht]
\renewcommand{\algorithmicrequire}{\textbf{Input:}}
\renewcommand{\algorithmicensure}{\textbf{Output:}}
\caption{Training algorithm}\label{algorithm_1}
\begin{algorithmic}[1]
\Require Dataset $\bm{x}_{0,j},j=1,\cdots,N$ and the training epoch $N_{\text{epoch}}$.  
\Ensure Well-trained networks $s^{\theta_1}(\cdot)$ and $g^{\theta_2}(\cdot)$. 
\State \textbf{\# Training of $s^{\theta_1}(\cdot)$:}
\State \hspace{1em}Use the dataset to optimize $\theta_1$ via standard denoising score matching.
\State \textbf{\# Training of $g^{\theta_2}(\cdot)$:}
\State \hspace{1em}$k\leftarrow1$.
\State \hspace{1em}\textbf{while} {$k\leq N_{\text{epoch}}$} \textbf{do}
\State \hspace{2em}Sample $t\sim\mathcal{U}(0,1)$.
\State \hspace{2em}Calculate the bias and scaling parameters of the Gaussian and stable RVs used to describe $\bm{x}_t$ based on \eqref{distribution_parameter_1}$\sim$\eqref{distribution_parameter_3}.
\State \hspace{2em}Calculate the target based on \eqref{loss_function_for_jump_rate}, \eqref{derive_I_1_2}, and \eqref{derive_I_2_2}.
\State \hspace{2em}Optimize $\theta_2$ by stochastic gradient descent based on \eqref{loss_function_for_jump_rate}.
\State \hspace{2em}$k\leftarrow k+1$.
\State \hspace{1em}\textbf{end while}
\end{algorithmic}
\end{algorithm}

\begin{algorithm}[!ht]
\renewcommand{\algorithmicrequire}{\textbf{Input:}}
\renewcommand{\algorithmicensure}{\textbf{Output:}}
\caption{Sampling algorithm for large jumps}\label{algorithm_2}
\begin{algorithmic}[1]
\Require $\bm{x}_t$, $\bm{x}^*_{0}$, lookup tables containing $(r,F_{\tilde{q},(r,t)}(\cdot))$ and $(z,F_{\tilde{q},z|(r,t)}(\cdot))$. 
\Ensure Large jump sample $\bm{\omega}$. 
\State \hspace{0em}$\bm{\mu}_t\leftarrow\bm{x}_t-\bm{\mu}(t,\bm{x}_0^*)$, where $\bm{\mu}(t,\bm{x}_0^*)$ is given in \eqref{distribution_parameter_1}.
\State \hspace{0em}$\breve{\bm{\mu}}_t\leftarrow\frac{\bm{\mu}_t}{\Vert\bm{\mu}_t\Vert}$.
\State \hspace{0em}Sample $u_1,u_2\sim\mathcal{U}(0,1)$ and $\bm{u}_3\sim\mathcal{N}(\bm{0},\bm{I}_D)$.
\State \hspace{0em}$r\leftarrow F^{-1}_{\tilde{q},(r,t)}(u_1)$, or equivalently, apply rejection sampling based on $\tilde{q}(\bm{x}_t,r)$ with Student's t-distribution as the proposal distribution.
\State \hspace{0em}$z\leftarrow F^{-1}_{\tilde{q},z|(r,t)}(u_2)$.
\State \hspace{0em}$\bm{u}_3\leftarrow \bm{u}_3-\langle\breve{\bm{\mu}}_t,\bm{u}_3\rangle\breve{\bm{\mu}}_t$.
\State \hspace{0em}$\breve{\bm{u}}_3\leftarrow \frac{\bm{u}_3}{\Vert\bm{u}_3\Vert}$.
\State \hspace{0em}$\bm{\omega}\leftarrow rz\breve{\bm{\mu}}_t+r\sqrt{1-z^2}\breve{\bm{u}}_3$.
\end{algorithmic}
\end{algorithm}

\begin{algorithm}[!ht]
\renewcommand{\algorithmicrequire}{\textbf{Input:}}
\renewcommand{\algorithmicensure}{\textbf{Output:}}
\caption{Total reverse sampling algorithm}\label{algorithm_3}
\begin{algorithmic}[1]
\Require $\bm{x}_{T,j}\sim p_{tp}(\bm{x}),j=1,\cdots,N$, well-trained networks $s^{\theta_1}(\cdot)$ and $g^{\theta_2}(\cdot)$, weighted distribution $g_Q(\bm{x}|\bm{x}_t,t)$, lookup tables containing $(r,F_{\tilde{q},(r,t)}(\cdot))$ and $(z,F_{\tilde{q},z|(r,t)}(\cdot))$, and time step size $\Delta t$. 
\Ensure Reversed samples $\bm{x}_{0,j},j=1,\cdots,N$. 
\State $j\leftarrow1$.
\State \textbf{while} {$j\leq N$} \textbf{do}
\State \hspace{1em}$t\leftarrow T$.
\State \hspace{1em}\textbf{while} {$t>0$} \textbf{do}
\State \hspace{2em}\textbf{\# Score-based update:}
\State \hspace{3em}Sample $\bm{\xi}_1,\bm{\xi}_2\sim\mathcal{N}(\bm{0},\bm{I}_D)$.
\State \hspace{3em}$\tilde{\bm{x}}_{t,j}\leftarrow \bm{x}_{t,j}+(-\bm{b}(\bm{x}_{t,j})+\Sigma(t)s^{\theta_1}(\bm{x}_{t,j},t))\Delta t+\Phi_G(t)\sqrt{\Delta t}\bm{\xi}_1$.
\State \hspace{2em}\textbf{\# Large jump generation:}
\State \hspace{3em}$\lambda(\bm{x}_{t,j})\leftarrow g^{\theta_2}(\bm{x}_{t,j},t)$.
\State \hspace{3em}Sample $\beta\sim\mathcal{U}(0,1)$.
\State \hspace{3em}\textbf{if} $\beta<1-\exp(-\lambda(\bm{x}_{t,j})\Delta t)$ \textbf{then}
\State \hspace{4em}Randomly choose a sample $\bm{x}^*_{0}$ from the dataset according to the truncated weighted distribution $\tilde{g}_Q(\bm{x})$.
\State \hspace{4em}Obtain a jump sample $\bm{\omega}$ from Algorithm \ref{algorithm_2} using $\bm{x}_{t,j}$, $\bm{x}^*_{0}$, and the lookup tables.
\State \hspace{3em}\textbf{else}
\State \hspace{4em}$\bm{\omega}\leftarrow\bm{0}$.
\State \hspace{3em}\textbf{end if}
\State \hspace{3em}$\tilde{\bm{x}}_{t,j}\leftarrow\tilde{\bm{x}}_{t,j}+\bm{\omega}$.
\State \hspace{2em}\textbf{\# Small jump generation:}
\State \hspace{3em}Calculate the drift $\bm{b}_{t,\Delta t}^{\leq\epsilon}$ in \eqref{Gaussian_parameter_1_simplified} and the scaling parameter $\Sigma_{t,\Delta t}^{\leq\epsilon}$ in \eqref{Gaussian_parameter_2}.
\State \hspace{3em}$\bm{x}_{t-\Delta t,j}\leftarrow \tilde{\bm{x}}_{t,j}+\bm{b}_{t,\Delta t}^{\leq\epsilon}\Delta t+\sqrt{\Delta t}(\Sigma_{t,\Delta t}^{\leq\epsilon})^{\frac{1}{2}}\bm{\xi}_2$.
\State \hspace{3em}$t\leftarrow t-\Delta t$.
\State \hspace{1em}\textbf{end while}
\State \hspace{1em}$j\leftarrow j+1$.
\State \hspace{0em}\textbf{end while}
\end{algorithmic}
\end{algorithm}

It should be remarked that for the large jump component, we should theoretically sample the number of jumps from $N_{t}^{>\epsilon}\sim\operatorname{Poisson}\left(g^{\theta_2}(\bm{x}_t,t)\Delta t\right)$. If $N_{t}^{>\epsilon}>0$, independently generate jump amplitudes $\bm{V}_{t,\ell}^{>\epsilon},\ell=1,\ldots,N_t^{>\epsilon}$. Then the large jump contribution is $\bm{X}_{t,\Delta t}^{>\epsilon}=\sum_{\ell=1}^{N_t^{>\epsilon}}\bm{V}_{t,\ell}^{>\epsilon}$. In practice, one may further approximate the compound Poisson update by allowing at most one large jump within each time step. Let $\Lambda(\bm{x}_t,t)=\lambda(\bm{x}_t,t)\Delta t$. The probability of two or more large jumps in the same interval is
\begin{align}
\Pr\left(N_t^{>\epsilon}\geq 2\right)=&1-e^{-\Lambda(\bm{x}_t,t)}\left(1+\Lambda(\bm{x}_t,t)\right)\nonumber\\
\leq&\frac{\Lambda(\bm{x}_t,t)^2}{2}
\end{align}

Therefore, the single jump approximation introduces an additional error of order
\begin{align}
E_{\mathrm{multi}}=O\left(\sup_{t,\bm{x}_t}\lambda(\bm{x}_t,t)^2\Delta t^2\right),
\end{align}
provided that $\lambda(\bm{x}_t,t)\Delta t$ is uniformly small.

\vspace{-0cm}
\subsection{Approximate observation-guided sampler}
Posterior sampling for the diffusion process can be derived straightforwardly based on Bayes' rule, i.e., $\nabla\log p(\bm{x}_t|\bm{y})=\nabla\log p(\bm{y}|\bm{x}_t)+\nabla\log p(\bm{x}_t)$. The score function is approximated by the output of the neural network, and the likelihood is obtained by approximating $p(\bm{y}|\bm{x}_t)$ with $p(\bm{y}|E[\hat{\bm{x}}_0|\bm{x}_t])$ \cite{paper25}. 

However, a similar strategy is not directly applicable to the jump component because the reversed jump kernel involves a nonlocal density ratio. Exact posterior sampling for L\'evy-driven SDEs is therefore considerably more challenging. In principle, the large jump rate should be replaced by the posterior large jump rate, which requires likelihood-reweighted integration over the empirical mixture at each reverse step. This computation is rather expensive, especially because the integral cannot be simplified in general.

In this work, we therefore use the prior-trained rate network as an amortized proposal rate for large jump events in the posterior sampler. This approximation means that the prior rate network is not interpreted as the exact posterior rate. Instead, it provides a computationally efficient proposal mechanism for deciding when nonlocal transitions are proposed. The observation information is then incorporated into the large jump amplitude and direction through likelihood-reweighted empirical mixture weights. This design prioritizes the posterior correction of the jump destination, which has a direct impact on the nonlocal transition once a large jump is triggered, while avoiding the high cost of recomputing the exact posterior jump rate at every reverse step.

In the experiments, we will evaluate its effect by comparing the prior proposal rate with the likelihood-reweighted posterior rate approximation and by examining the sensitivity of the final estimation performance to the jump-rate choice. In the following proposition, we analyze how the observation modifies the empirical mixture weights used for large jump amplitude sampling.

\begin{proposition}\label{proposition_4}
\textit{Let $\bm{y}$ be the observations used for posterior sampling of the forward SDE \eqref{SDE}. Assume that the observation depends on the clean state $X_0$ only, i.e., $p_{\bm{Y}|\bm{X}_0,\bm{X}_t}(\bm{y}|\bm{x}_{0},\bm{x}_t)=p_{\bm{Y}|\bm{X}_0}(\bm{y}|\bm{x}_{0})$. Then, the weighted distribution $g_Q(\bm{x}|\bm{x}_t,t)$ used for generating large jump samples should be replaced by
\begin{equation}
g_{\text{posterior},Q}(\bm{x})=\sum_{j=1}^{N}\frac{1_{\{\bm{x}_{0,j}\}}(\bm{x})p_{\bm{Y}|\bm{X}_0}(\bm{y}|\bm{x}_{0,j})Q_{t,\epsilon,j}}{\sum_{j=1}^{N}p_{\bm{Y}|\bm{X}_0}(\bm{y}|\bm{x}_{0,j})Q_{t,\epsilon,j}}
\end{equation}}
\end{proposition}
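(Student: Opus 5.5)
The plan is to repeat the derivation of the prior large-jump kernel, this time with the marginal density $p(\bm{x}_t)$ replaced by the conditional density $p(\bm{x}_t|\bm{y})$.

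\textbf{Step 1: posterior jump kernel.} Conditioning the forward SDE \eqref{SDE} on $\bm{Y}=\bm{y}$ turns it into an $h$-transformed Markov process whose marginals are $p(\bm{x}_t|\bm{y})$. The time-reversal argument of Proposition \ref{Proposition_1} then applies verbatim with $p(\cdot)$ replaced by $p(\cdot|\bm{y})$. Hence the posterior large-jump kernel is
\begin{equation*}
\frac{p(\bm{x}_t+\bm{v}|\bm{y})}{p(\bm{x}_t|\bm{y})}\nu_t(d\bm{v}),\qquad \Vert\bm{v}\Vert>\epsilon.
\end{equation*}
After normalization over $\bm{v}$, only the numerator $p(\bm{x}_t+\bm{v}|\bm{y})\nu_t(d\bm{v})$ matters, exactly as in \eqref{normalizing_objective_distribution}.

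\textbf{Step 2: expand over the clean sample.} I write
\begin{equation*}
p(\bm{x}_t+\bm{v}|\bm{y})=\int p(\bm{x}_t+\bm{v}|\bm{x}_0,\bm{y})\,p(\bm{x}_0|\bm{y})\,d\bm{x}_0 .
\end{equation*}
The assumption $p(\bm{y}|\bm{x}_0,\bm{x}_t)=p(\bm{y}|\bm{x}_0)$ means $\bm{Y}$ is conditionally independent of $\bm{X}_t$ given $\bm{X}_0$. By Bayes' rule this gives $p(\bm{x}_t+\bm{v}|\bm{x}_0,\bm{y})=p(\bm{x}_t+\bm{v}|\bm{x}_0)$, so the forward transition density of Proposition \ref{Proposition_3} is unchanged. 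Also $p(\bm{x}_0|\bm{y})\propto p(\bm{y}|\bm{x}_0)p(\bm{x}_0)$. Replacing $p(\bm{x}_0)$ by the empirical measure over $\{\bm{x}_{0,j}\}$ yields
\begin{equation*}
p(\bm{x}_t+\bm{v}|\bm{y})\nu_t(d\bm{v})\propto\sum_{j=1}^{N}p(\bm{y}|\bm{x}_{0,j})\,p(\bm{x}_t+\bm{v}|\bm{x}_{0,j})\,\nu_t(d\bm{v}).
\end{equation*}
The normalizing constant does not depend on $\bm{v}$ or $j$.

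\textbf{Step 3: mixture decomposition.} I follow \eqref{decompose_Q}--\eqref{rewritte_objective_PDF}: multiply and divide each term by $Q_{t,\epsilon,j}$ to get
\begin{equation*}
p(\bm{y}|\bm{x}_{0,j})\,Q_{t,\epsilon,j}\,\tilde{q}(\bm{x}_t,\bm{v}|\bm{x}_{0,j}).
\end{equation*}
Integrating over $\Vert\bm{v}\Vert>\epsilon$, the total mass is $\sum_j p(\bm{y}|\bm{x}_{0,j})Q_{t,\epsilon,j}$. The normalized posterior jump distribution is therefore the mixture $\sum_j g_{\text{posterior},Q}(\bm{x}_{0,j})\tilde{q}(\bm{x}_t,\bm{v}|\bm{x}_{0,j})$, with the weights stated in the proposition. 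The conditional components $\tilde{q}(\cdot|\bm{x}_{0,j})$, and hence Algorithm \ref{algorithm_2}, are left unchanged.

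\textbf{Main obstacle.} The delicate point is Step 1: justifying that the posterior reverse dynamics have the Proposition \ref{Proposition_1} form with $p(\cdot|\bm{y})$. I would handle this either by noting that the conditioned process is still Markov with the same Lévy measure, only reweighted by $h(\bm{x},t)=p(\bm{y}|\bm{x}_t=\bm{x})$, or by directly redoing the adjoint computation of Appendix \ref{appendix_1} with $p(\bm{x}_t|\bm{y})$ in place of $p(\bm{x}_t)$. Under the conditional-independence assumption, $p(\bm{y}|\bm{x}_t)=\int p(\bm{y}|\bm{x}_0)p(\bm{x}_0|\bm{x}_t)d\bm{x}_0$ keeps everything consistent. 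I would also state explicitly that the jump rate changes to the posterior rate, whereas the proposition concerns only the amplitude mixture weights, in keeping with the approximation discussed just before it.
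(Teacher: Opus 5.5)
Your proposal is correct and follows essentially the same route as the paper's Appendix~C. The paper starts from $p(\bm{x}_t+\bm{v},\bm{y})\nu_t(d\bm{v})$, expands $p_{\bm{Y}|\bm{X}_t}(\bm{y}|\bm{u})$ through $\bm{x}_0$ and then flips with Bayes' rule, whereas you expand $p(\bm{x}_t+\bm{v}|\bm{y})$ over $p(\bm{x}_0|\bm{y})$ directly; both reach $\sum_j p_{\bm{Y}|\bm{X}_0}(\bm{y}|\bm{x}_{0,j})\,p(\bm{x}_t+\bm{v}|\bm{x}_{0,j})\,\nu_t(d\bm{v})$ and finish with the same mixture decomposition.

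Your Step~1 justifies what the paper simply takes as its starting point, but the $h$-transform wording is misplaced. Conditioning on $\bm{y}$ only changes the forward initial law to $p(\bm{x}_0|\bm{y})$, so Proposition~\ref{Proposition_1} applies verbatim, and the $h$-transform by $p(\bm{y}|\bm{X}_t=\bm{x})$ describes the reversed process, not the forward one.
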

\begin{IEEEproof}
The proof is relegated to Appendix \ref{appendix_3}.
\end{IEEEproof}

Note that the likelihood can usually be efficiently obtained from the transition function between $\bm{x}_0$ and $\bm{y}$. For instance, this transition function is the channel model for communication tasks. Proposition \ref{proposition_4} shows that the large jump amplitude sampler can be adapted to the observation-guided setting by modifying the empirical mixture weights. Similarly, $g_{\text{posterior},Q}(\bm{x})$ can also be truncated to reduce the inference latency.

\vspace{-0cm}
\section{Numerical results}\label{section_4}
In this section, several numerical experiments are provided to validate the advantages of the proposed inverse sampler. We consider channel estimation for OFDM-SISO systems under mixed noise.

\begin{figure*}[htbp]
\centering
\subfloat[]{\includegraphics[width=2.4in]{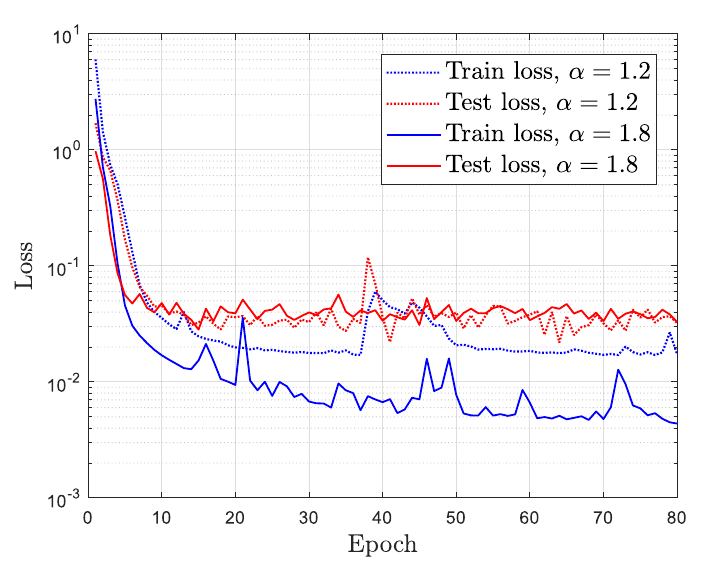}\label{fig_1_a}}
\subfloat[]{\includegraphics[width=2.4in]{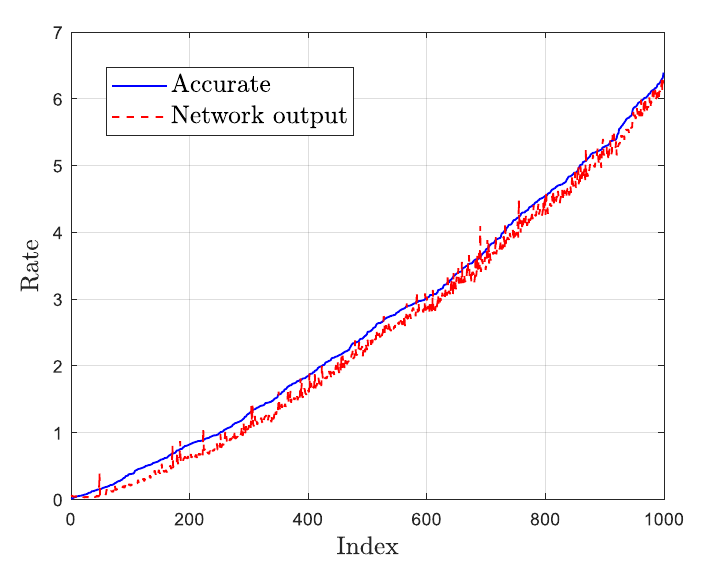}\label{fig_1_b}}
\subfloat[]{\includegraphics[width=2.4in]{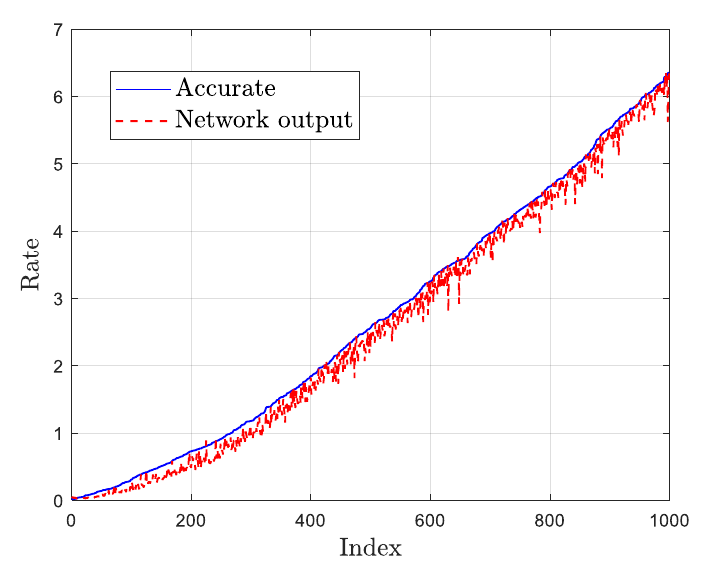}\label{fig_1_c}}
\caption{Training loss, test loss, and accuracy validation of jump rate prediction using the proposed lightweight neural network under different values of $\alpha$. (a) Comparison of training and test losses; (b) Comparison of jump rates for $\alpha=1.2$; (c) Comparison of jump rates for $\alpha=1.8$.}
\vspace{-0cm}
\label{fig_1}
\end{figure*}

For the channel model, we consider a frequency-selective SISO-OFDM channel modeled by a tapped-delay-line (TDL) response. The path gains are modeled as independent circularly symmetric complex Gaussian random variables with prescribed average powers. Moreover, the delay-power profiles from 3GPP TR 38.901 are used. Due to space limitations, we choose the TDL-A, TDL-C, and TDL-D delay-power profiles as representative scenarios. The mixed channel noise consists of WGN and IN described by the S$\alpha$S distribution. They are mutually independent and memoryless. Let their scaling parameters be $\gamma_g$ and $\gamma_s$, respectively. Then, the generalized signal-to-noise ratio (GSNR) is defined as follows:
\begin{equation}\label{GSNR}
\text{GSNR(dB)}=10\log_{10}\frac{P_s}{2(\gamma_g^2+\gamma_s^2)},
\end{equation}
where $P_s$ is the transmit signal power.

For the baselines, we consider existing model-based algorithms and generative methods, including:
\begin{itemize}
\item{\textbf{Linear minimum mean square error (LMMSE)}:
    LMMSE is an efficient channel estimation method under WGN. However, its performance and stability may deteriorate considerably in mixed-noise scenarios due to impulsive outliers.}
\item{\textbf{Clipped LMMSE}:
    Clipped LMMSE first suppresses large-amplitude pilot observations through clipping and then applies the conventional LMMSE estimator. This simple preprocessing improves the robustness of LMMSE against IN while retaining low computational complexity.}
\item{\textbf{Clipped orthogonal matching pursuit (OMP) \cite{paper27}}:
    OMP iteratively selects dominant delay atoms from a common candidate dictionary and reconstructs the frequency-domain channel. Clipped OMP is a clipped version of OMP for  scenarios with mixed noise.}
\item{\textbf{Clipped sparse Bayesian learning (SBL)}:
    SBL estimates the delay-domain sparse channel by learning the hyperparameters that reflect the sparsity. Clipped SBL is a clipped version of conventional SBL.}
\item{\textbf{Outlier-aware SBL \cite{paper26,paper28}}:
    Different from Clipped SBL, outlier-aware SBL explicitly models the received pilot observations as the sum of a sparse delay-domain channel component, a sparse outlier component, and Gaussian background noise. By jointly estimating the channel and the impulsive outliers, this method is designed for mixed Gaussian and IN environments.}
\item{\textbf{Diffusion method \cite{paper15}}:
    This method uses a generative model based on the diffusion forward process and posterior inverse inference. Different from the proposed framework, it does not introduce the jump process and uses only the local score function to recover the target distribution.}
\item{\textbf{Lévy-DSM \cite{paper19}}:
    This method is based on the inverse sampler designed for the Lévy process under the DSM framework. It shows that the network can be trained to regress noise following $\alpha$-stable distributions to achieve inverse sampling.}
\item{\textbf{Benchmark}:
    The benchmark is a genie-aided LMMSE estimator under WGN with known true delay support. It uses ideal prior delay information that is unavailable to practical estimators. Therefore, it serves as an NMSE reference for the achievable estimation performance.}
\end{itemize}

Finally, the other parameter configurations are as follows: the number of subcarriers is $N_c=64$; the number of OFDM symbols per frame is $N_t=16$; the pilot pattern is ``comb''; the pilot spacing is 4 or 8; the modulation scheme is QPSK; and the number of paths, path delays, and path powers follow TDL-A/C/D. The pilot spacing is set to either 4 or 8 to validate the performance under different pilot densities.

\subsection{Structure and training of network $g^{\theta_2}$}
Since the large jump rate is a scalar statistical quantity, a lightweight neural network is sufficient for its amortized estimation. Specifically, we parameterize $g_{\theta_2}(\bm{x}_t,t)$ by a convolutional neural network (CNN) and a multilayer perceptron (MLP). Its input is the normalized state $\bm{x}_t$ concatenated with a time embedding of $t$. To ensure the nonnegativity of the estimated jump rate, the final output is passed through a softplus activation. Compared with ReLU, softplus provides a smoother parameterization and avoids permanently inactive zero-rate outputs. In our implementation, the network is trained by minimizing the mean square error (MSE) between its output and $\lambda(\bm{x}_t|\bm{x}_0)$. Note that the original sample $\bm{x}_0$ is only used to construct the conditional training target. It is not fed into the network as input because an accurate $\bm{x}_0$ is not available at the inference stage.

For the OFDM channel estimation task, the complex-valued state is first represented by its real and imaginary parts. For example, when one frame contains $N_t$ OFDM symbols and $N_c$ subcarriers, we reshape the state into a two-channel tensor $\mathbf{X}_t\in\mathbb{R}^{2\times N_t\times N_c}$. The rate network adopts a small convolutional architecture to exploit the local time-frequency structure while keeping the number of trainable parameters low. Specifically, it consists of four $3\times 3$ convolutional layers with channel widths $\{16,16,32,32\}$, respectively, followed by a global average pooling layer. The pooled feature is concatenated with the time embedding and then passed to a two-layer MLP head to produce a scalar output. The estimated large jump rate is given by
\begin{align}
g^{\theta_2}(\mathbf{x}_t,t)=&\mathrm{softplus}(\mathrm{MLP}^{\theta_2}([\mathrm{GAP}(\mathrm{CNN}^{\theta_2}(\mathbf{X}_t)),\mathrm{Emb}(t)))\nonumber\\
&+\lambda_{\epsilon},
\end{align}
where $\mathrm{GAP}(\cdot)$ denotes global average pooling, $\mathrm{Emb}(t)$ is the time embedding, and $\lambda_{\epsilon}>0$ is a small constant for numerical stability. For $N_t=16$ and $N_c=64$, this small CNN contains approximately $2\times 10^4$ trainable parameters. This is much smaller than directly parameterizing the whole reverse transition with a neural network.

The MSE loss between the accurate rate and the network output is shown in Fig. \ref{fig_1}. Fig. \ref{fig_1_b} and Fig. \ref{fig_1_c} present the results of 1000 experiments, where the time instant and data sample $\bm{x}_{0}$ are chosen randomly. These 1000 tests are then sorted according to their accurate jump rates. For validation, the reference large jump rate is computed using the empirical marginal expression in \eqref{byhand_3}, rather than a single conditional target $\lambda(\bm{x}_t|\bm{x}_0)$. From Fig. \ref{fig_1_a}, it can be observed that the neural network converges very fast and only requires about 60 epochs with different values of $\alpha$. Moreover, the network can accurately estimate the jump rate under different impulsiveness levels.

\vspace{-0cm}
\begin{figure}[htbp]
\centering
\subfloat[NMSE, TDL-A, $\alpha=1.2$]{\includegraphics[width=1.75in]{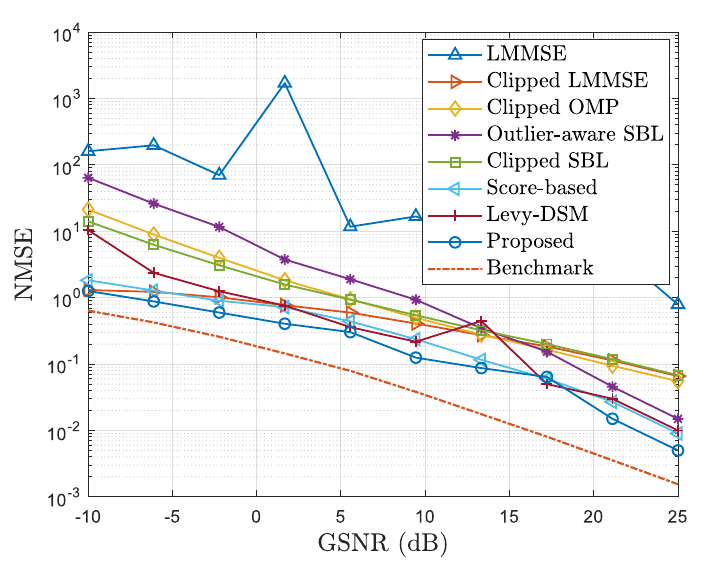}\label{fig_2_a}}
\subfloat[BER, TDL-A, $\alpha=1.2$]{\includegraphics[width=1.75in]{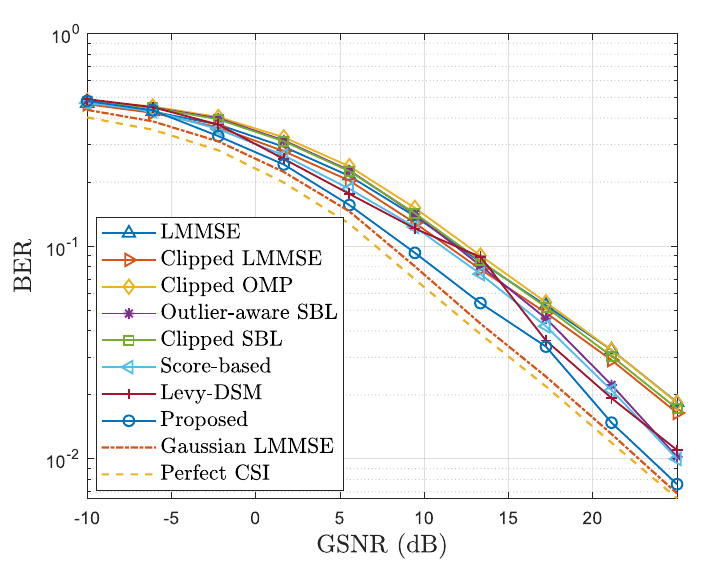}\label{fig_2_b}}

\subfloat[NMSE, TDL-C, $\alpha=1.2$]{\includegraphics[width=1.75in]{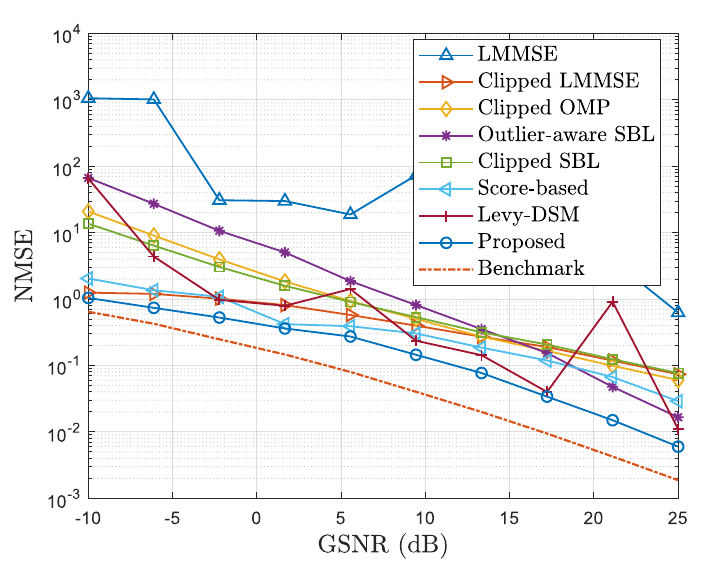}\label{fig_2_c}}
\subfloat[BER, TDL-C, $\alpha=1.2$]{\includegraphics[width=1.75in]{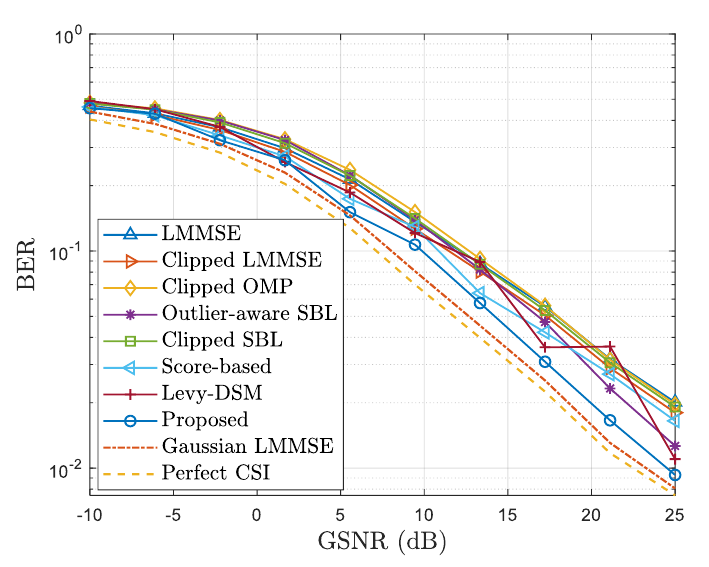}\label{fig_2_d}}

\subfloat[NMSE, TDL-D, $\alpha=1.2$]{\includegraphics[width=1.75in]{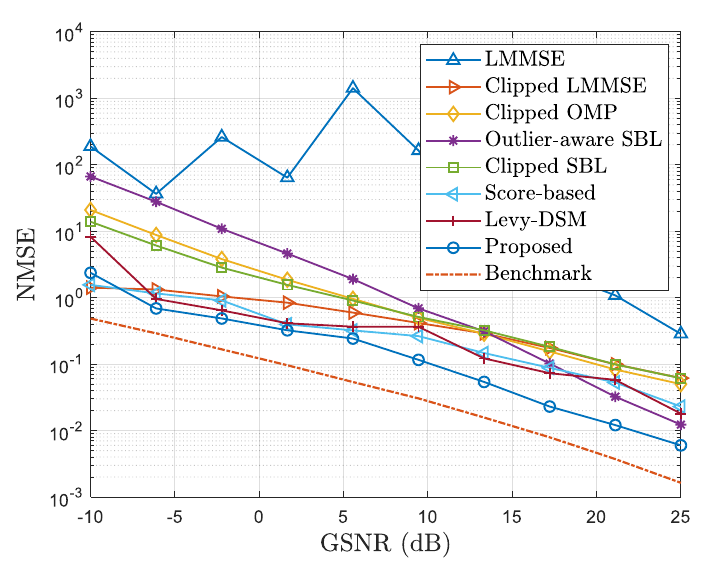}\label{fig_2_e}}
\subfloat[BER, TDL-D, $\alpha=1.2$]{\includegraphics[width=1.75in]{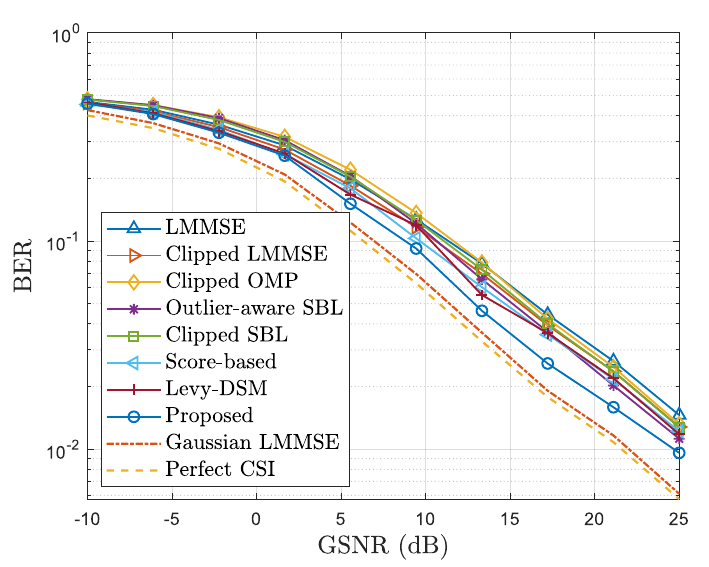}\label{fig_2_f}}
\caption{NMSE and BER comparisons of channel estimation under different scenarios (TDL-A/C/D) with $\alpha=1.2$.}
\label{fig_2}
\end{figure}
\vspace{-0cm}

\subsection{Various channel configurations}
Here, we investigate the channel estimation performance under various $\alpha$. First, the channel estimation accuracy and the corresponding detection performance are compared. We use normalized mean square error (NMSE) and bit error rate (BER) as the criteria. The experimental results are presented in Fig. \ref{fig_2} and Fig. \ref{fig_2_plus}.

In general, the proposed approach achieves the best performance among the baselines in terms of both NMSE and BER. For ``Clipped LMMSE'', ``Clipped SBL'', and ``Clipped OMP'', the clipping operation suppresses most IN samples and thus improves robustness. However, clipping also distorts the transmitted signals, especially in the high-GSNR regime. Therefore, these three methods show a lower NMSE decay rate as the GSNR continues to increase. ``Outlier-aware SBL'' has better robustness under mixed channel noise because it explicitly considers the influence of IN samples. However, it can only achieve an accurate estimation at the pilot locations. For learning-based methods, both ``Score-based'' and ``Proposed'' learn the statistical structure of the channel model and thus achieve better estimation accuracy in various scenarios. Compared with the ``Score-based'' algorithm, the proposed method is more likely to reach a better performance because it allows large jumps to enable nonlocal transitions between distant high-probability regions. In contrast, ``Score-based'' only uses gradient information and small step sizes for optimization. As a result, it may get trapped in local optima, especially when the channel model is more complicated. Finally, the proposed method is also more robust and stable compared to the ``Lévy-DSM'', which is consistent with the above analyses.

Fig. \ref{fig_2_plus} presents the estimation results under different impulsiveness levels. When $\alpha=1.2$, there are multiple outliers with large amplitudes, and the channel becomes much more complex. In this case, the proposed approach is not only robust but also achieves a larger performance gain compared with the cases with larger $\alpha$. However, when $\alpha$ is close to 2 and the impulsiveness is very weak, the existing baselines also show comparable performance. This is consistent with the above analysis.

\vspace{-0cm}
\begin{figure}[htbp]
\centering
\subfloat[NMSE, TDL-C, $\alpha=1.8$]{\includegraphics[width=1.75in]{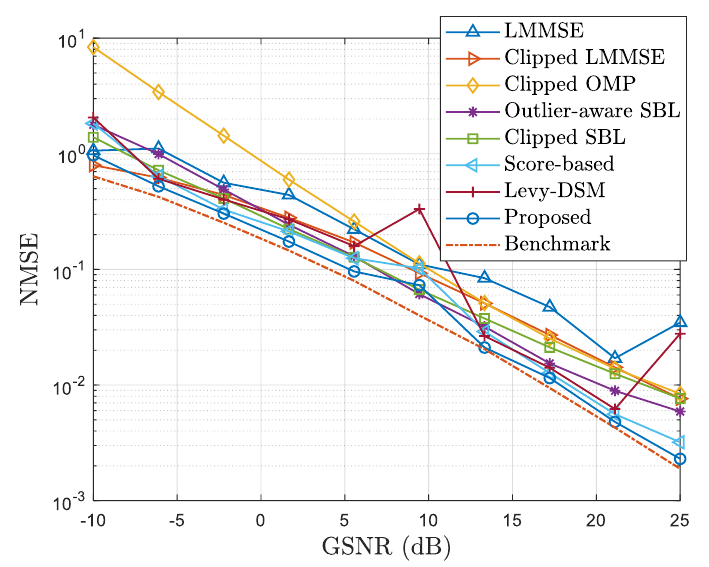}\label{fig_2_g}}
\subfloat[BER, TDL-C, $\alpha=1.8$]{\includegraphics[width=1.75in]{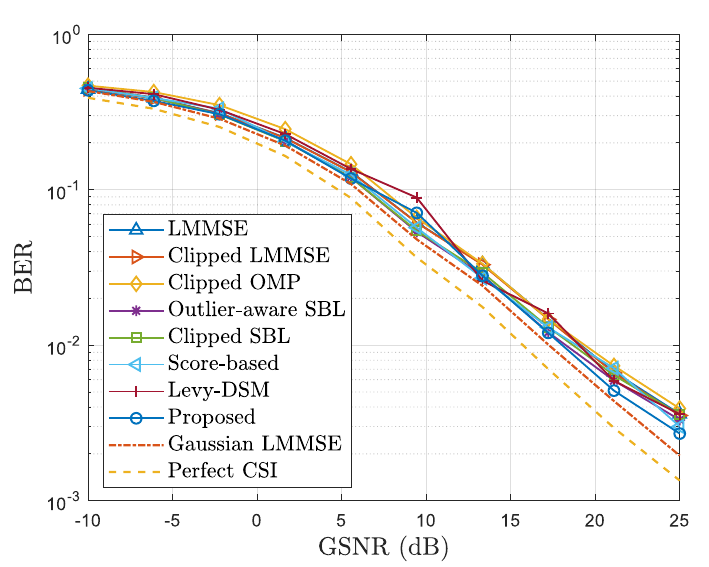}\label{fig_2_h}}
\caption{NMSE and BER comparisons of channel estimation with TDL-C as the delay-power profile and $\alpha=1.8$.}
\label{fig_2_plus}
\end{figure}
\vspace{-0cm}

\subsection{Ablations}
Ablation studies are conducted to explore the influence from the approximation of the forward noise distribution, the top-$K$ truncation and the approximated posterior jump rate.

During the design of the inverse sampler, the distribution of the forward mixed noise at arbitrary time instants is required. However, there is no exact closed-form PDF expression for the impulsive component or the mixed noise. To avoid numerical integration, we use the approximated PDF of the mixed noise  given in \eqref{approximation_of_X_M_1_dim} and \eqref{approxiamted_p_xt}. The following experiments demonstrate the effectiveness of this approximation. We still use channel estimation as the application. 

In Fig. \ref{fig_3}, Fig. \ref{fig_4} and Fig. \ref{fig_5}, we separately focus on the effects of noise distribution approximation, top-$K$ truncation and large jump rate approximation. The time consumption is shown in Table \ref{table_2}, using a CPU configuration of `12th Gen Intel(R) Core(TM) i9-12900H'. It can be observed that using the accurate PDF does not bring a considerable performance gain, while the complexity of ``Accurate PDF'' is unacceptable. This is because the distribution needs to be calculated by numerical integration at every step. More importantly, even though the high-dimensional PDF can be transformed into polar coordinates, the integral is still challenging to further simplify. This remains time-consuming when a high resolution is required to ensure accuracy.

Then, the approximation error introduced by the top-$K$ truncation is examined. For the benchmark, we set $K$ equal to the total number of samples used for training. Here, for the truncation scheme, we set $K$ as $10\%$ of the total number of training samples. Similar to the noise model approximation, the truncation error is also not significant. This is because, for a given time instant and sample $\bm{x}_t$, the probability that most original data samples generate $\bm{x}_t$ is quite small. Therefore, the influence of these samples on jump generation is negligible. As a result, $K$ can be much smaller than the total number of samples, which reduces redundancy and improves efficiency.

Finally, Fig. \ref{fig_5} implies that the posterior inverse sampling based on the exact large jump rate achieves a better NMSE. However, its computational cost significantly increases since the jump rate calculation needs to rely on the numerical integral. Moreover, in terms of detection error probability, the performance gain is less considerable than that with respect to NMSE. This result supports that using prior large jump rate is an efficient scheme. 

\textbf{In summary, the approximation operations in the inverse sampling algorithm achieve a tradeoff between performance and complexity.}

\vspace{-0cm}
\begin{figure}[htbp]
\centering
\subfloat[NMSE]{\includegraphics[width=1.75in]{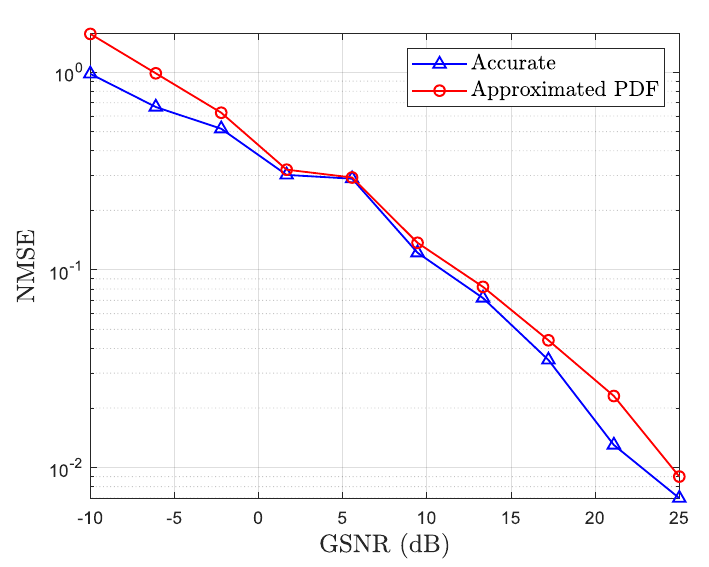}\label{fig_3_a}}
\subfloat[BER]{\includegraphics[width=1.75in]{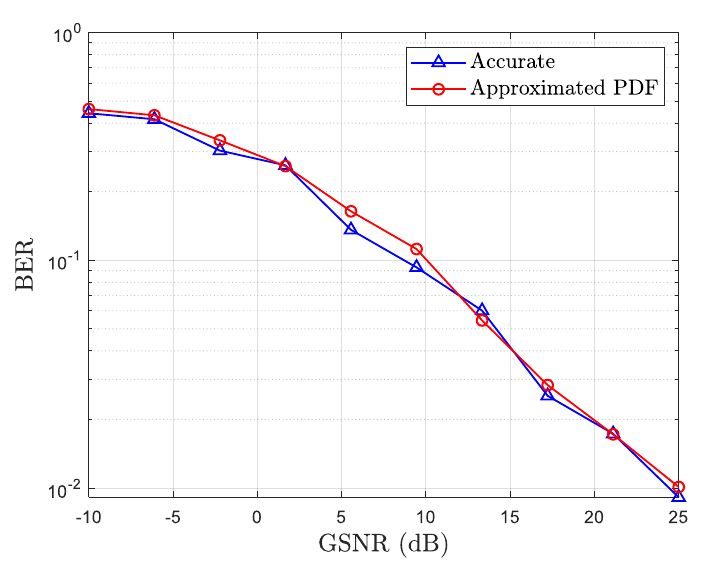}\label{fig_3_b}}
\caption{Ablation results compared with the accurate mixed noise PDF in terms of NMSE and BER.}
\label{fig_3}
\end{figure}
\vspace{-0cm}

\begin{figure}[htbp]
\centering
\subfloat[NMSE]{\includegraphics[width=1.75in]{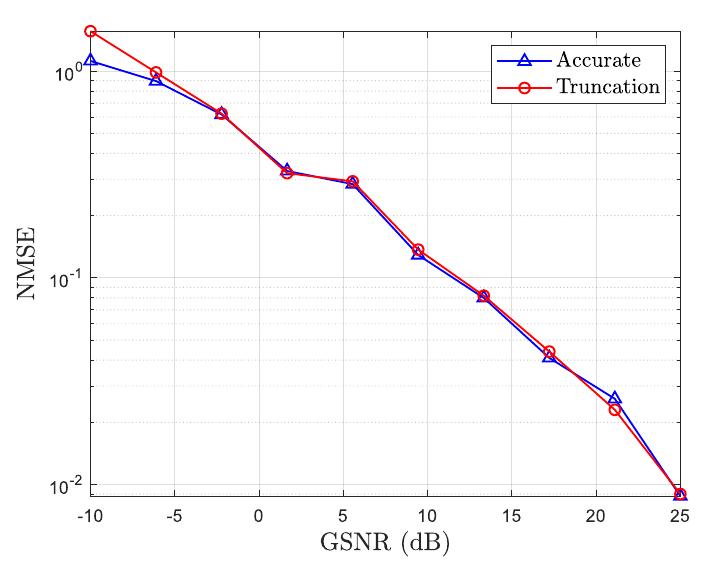}\label{fig_4_a}}
\subfloat[BER]{\includegraphics[width=1.75in]{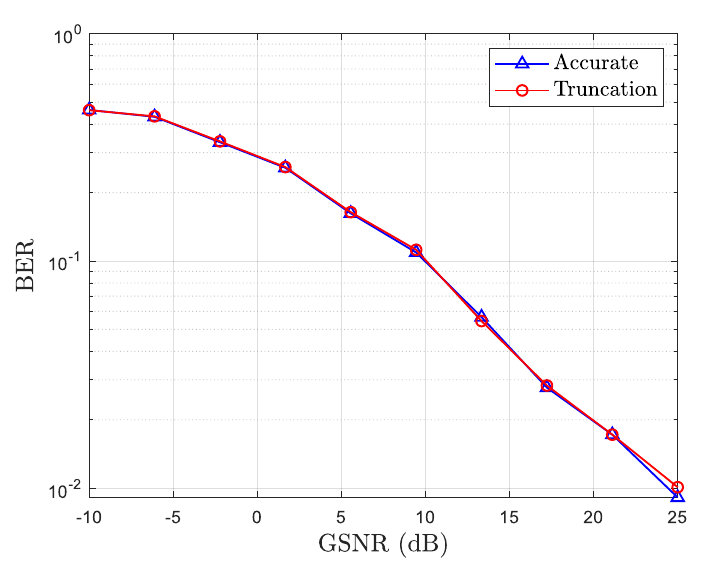}\label{fig_4_b}}
\caption{Ablation results compared with the no-truncation case in terms of NMSE and BER.}
\label{fig_4}
\end{figure}
\vspace{-0cm}

\begin{figure}[htbp]
\centering
\subfloat[NMSE]{\includegraphics[width=1.75in]{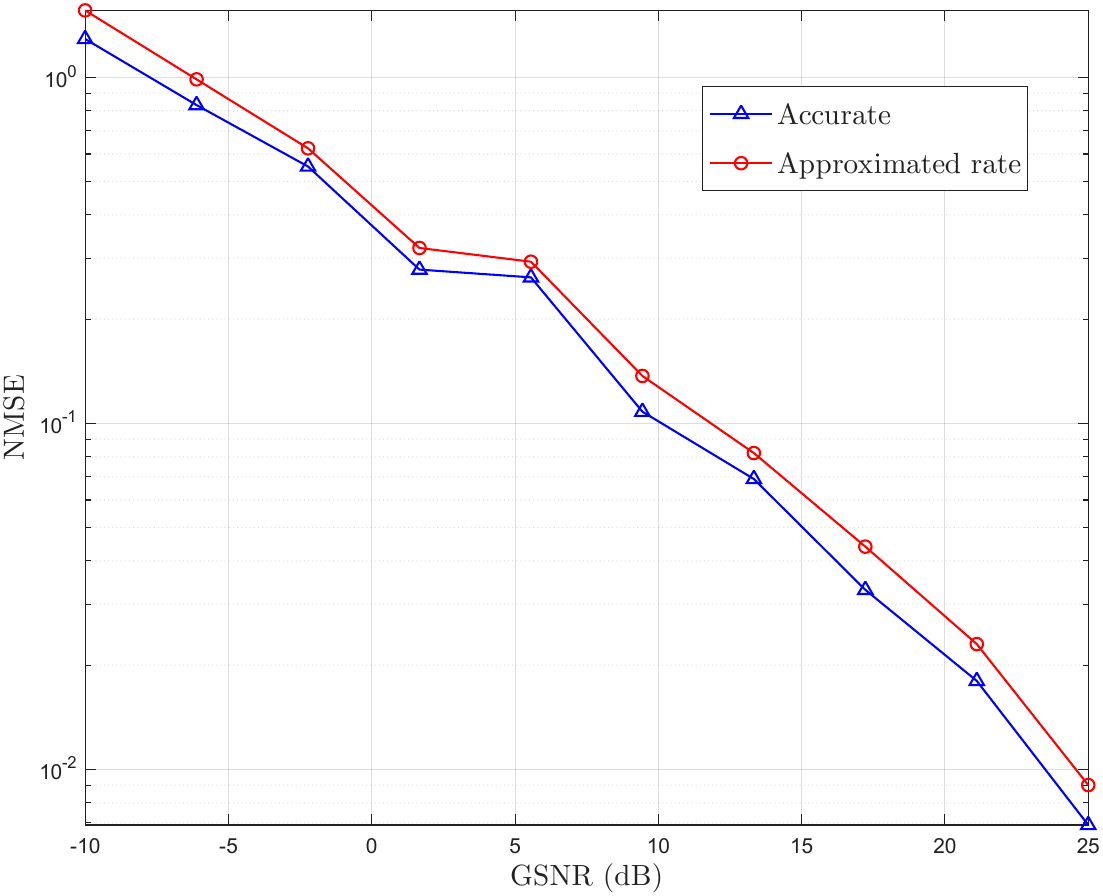}\label{fig_4_a}}
\subfloat[BER]{\includegraphics[width=1.75in]{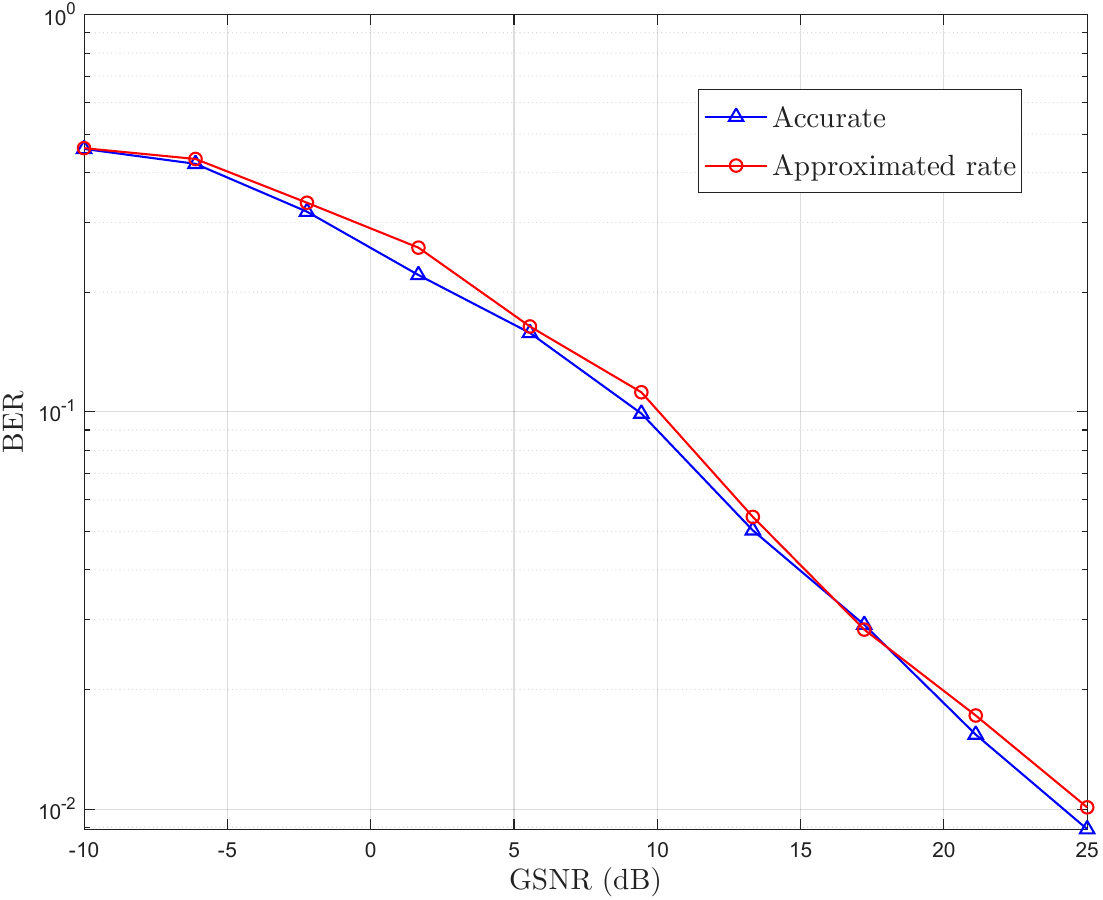}\label{fig_4_b}}
\caption{Ablation results compared with the accurate posterior large jump rate in terms of NMSE and BER.}
\label{fig_5}
\end{figure}

\linespread{1.2}
\begin{table}[htbp]
\begin{center}
\caption{Time consumption comparison of simulations in Fig. \ref{fig_3}$\sim$\ref{fig_5} under different values of $\alpha$.}\label{table_2}
\begin{tabular}{ccc}
\toprule
Algorithms/Scenarios& $\alpha=1.2$ & $\alpha=1.8$\\
\midrule
\textbf{Proposed}& \textbf{5.6s}& \textbf{5.5s}\\
Accurate PDF& 1717.6s& 1698.4s\\
No truncation& 52.7s& 54.1s\\
Accurate rate& 6125.3s& 6379.6s\\
\bottomrule
\vspace{-0cm}
\end{tabular}
\end{center}
\end{table}
\linespread{1}

\vspace{-0cm}
\section{Conclusions}\label{section_5}
In this paper, we proposed a generator-guided inverse sampling algorithm for a class of isotropic linear L\'evy-driven SDEs with symmetric $\alpha$-stable jump components. We first derived the expression of the backward generator and used it to decompose the full reverse sampling process into the diffusion part and the jump components. Then, by theoretically analyzing the inverse sampling process from a statistical perspective, we determined the objective that can be learned by a lightweight network. We also designed an efficient reverse sampling method by introducing several efficient approximation operations. As an application, the proposed inverse sampling algorithm outperformed the considered baselines in channel estimation under mixed channel noise across different scenarios. Moreover, numerical results showed that the proposed approach achieves a good balance between performance and complexity.

\begin{appendices}
\section{Proof of Proposition \ref{Proposition_1}}\label{appendix_1}
\setcounter{equation}{0}
\renewcommand{\theequation}{A.\arabic{equation}}
For a Lévy process with infinite activity, the generator associated with the test function $f$ is given by
\begin{align}\label{Lévy_forward_generator}
&(\mathcal{L}_{J,F}f)(\bm{x}_t)=\bm{b}_{J,F}^{\epsilon}(\bm{x}_t,t)\cdot\nabla f(\bm{x}_t) \nonumber\\
&+ \int_{R^D} {\left( {f(\bm{x}_t+\bm{v}) - f(\bm{x}_t) - 1_{\{\bm{v}:\Vert\bm{v}\Vert \le \epsilon\}}\bm{v}\cdot\nabla f(\bm{x}_t)} \right)\nu_t (d\bm{v})}
\end{align}
where $\bm{b}_{J,F}^{\epsilon}(\bm{x}_t,t)$ is the canonical drift induced by $d\bm{L}_t$. For the isotropic case, we have $\bm{b}_{J,F}^{\epsilon}(\bm{x}_t,t)=0$. By substituting \eqref{Lévy_forward_generator} into \eqref{forward_generator}, the complete forward generator is obtained as
\begin{align}\label{complete_forward_generator}
&\left( {\mathcal{L}_Ff} \right)(\bm{x}_t)=\bm{b}_F^{\epsilon}(\bm{x}_t,t)\cdot\nabla f(\bm{x}_t)+\frac{1}{2}{\rm{Tr}}\left( {\Sigma (t)\nabla^2 f(\bm{x}_t)} \right)\nonumber\\
&+\int_{R^D} {\left( {f(\bm{x}_t+\bm{v}) - f(\bm{x}_t) - 1_{\{\bm{v}:\Vert\bm{v}\Vert \le \epsilon\}}\bm{v}\cdot\nabla f(\bm{x}_t)} \right)\nu_t (d\bm{v})}
\end{align}
with $\bm{b}_F^{\epsilon}(\bm{x}_t,t)=\bm{b}(\bm{x}_t,t)+\bm{b}_{J,F}^{\epsilon}(\bm{x}_t,t)$. Next, time reversal is applied to derive the backward generator. Based on \cite{paper21}, the forward and backward jump kernels satisfy the flux equation
\begin{align}\label{flux_equation}
p(\bm{x}_t){{\mathord{\buildrel{\lower3pt\hbox{$\scriptscriptstyle\leftarrow$}}
\over K} }_{t,\bm{x}_t}}(d\bm{v})=p(\bm{x}_t+\bm{v}){{\mathord{\buildrel{\lower3pt\hbox{$\scriptscriptstyle\rightarrow$}}
\over K} }_{t,\bm{x}_t+\bm{v}}}(d(-\bm{v}))
\end{align}
which means that the jump density from $\bm{x}_t$ to $\bm{x}_t+\bm{v}$ should be equal to that from $\bm{x}_t+\bm{v}$ back to $\bm{x}_t$. Moreover, the backward canonical drift satisfies \cite{paper21}
\begin{align}\label{backward_drifting_Lévy_condition}
\bm{b}_{J,F}^{\epsilon}(\bm{x}_t,t)&+\bm{b}_{J,B}^{\epsilon}(\bm{x}_t,t)\nonumber\\
=&\int_{R^D} {1_{\{\bm{v}:\Vert\bm{v}\Vert\le\epsilon\}}\bm{v}\left( {{\mathord{\buildrel{\lower3pt\hbox{$\scriptscriptstyle\rightarrow$}}
\over K} }_{t,\bm{x}_t}(d\bm{v}) + {\mathord{\buildrel{\lower3pt\hbox{$\scriptscriptstyle\leftarrow$}}
\over K} }_{t,\bm{x}_t}(d\bm{v})} \right)}
\end{align}

Therefore, we have
\begin{align}\label{backward_kernel}
{{\mathord{\buildrel{\lower3pt\hbox{$\scriptscriptstyle\leftarrow$}}
\over K} }_{t,\bm{x}_t}}(d\bm{v}) =&\frac{p(\bm{x}_t+\bm{v})}{p(\bm{x}_t)}{{\vec K}_{t,\bm{x}_t+\bm{v}}}\left( {d\left( { - \bm{v}} \right)} \right)\nonumber\\
=&\frac{p(\bm{x}_t+\bm{v})}{p(\bm{x}_t)}\nu_t (d(-\bm{v}))
\end{align}

In this case, the backward generator of the Lévy process is given by
\begin{align}\label{backward_generator_Lévy}
&(\mathcal{L}_{J,B}f)(\bm{x}_t)\nonumber\\
=&\bm{b}_{J,B}^{\epsilon}(\bm{x}_t,t)\cdot\nabla f(\bm{x}_t) + \int_{R^D} \big( f(\bm{x}_t+\bm{v}) - f(\bm{x}_t)\nonumber\\
&- 1_{\{\bm{v}:\Vert\bm{v}\Vert\le\epsilon\}}\bm{v}\cdot\nabla f(\bm{x}_t) \big){{\mathord{\buildrel{\lower3pt\hbox{$\scriptscriptstyle\leftarrow$}}
\over K} }_{t,\bm{x}_t}}(d\bm{v})\nonumber\\
=&\bigg(-\bm{b}_{J,F}^{\epsilon}(\bm{x}_t,t)+\int_{R^D} 1_{\{\bm{v}:\Vert\bm{v}\Vert\le\epsilon\}}\bm{v}\cdot \Big(\nu_t(d\bm{v})\nonumber\\
&+{{\mathord{\buildrel{\lower3pt\hbox{$\scriptscriptstyle\leftarrow$}}
\over K} }_{t,\bm{x}_t}}(d\bm{v})\Big)\bigg)\cdot\nabla f(\bm{x}_t)+\int_{R^D}\big( f(\bm{x}_t+\bm{v}) - f(\bm{x}_t)\nonumber\\
&- 1_{\{\bm{v}:\Vert\bm{v}\Vert\le\epsilon\}}\bm{v}\cdot\nabla f(\bm{x}_t) \big)\mathord{\buildrel{\lower3pt\hbox{$\scriptscriptstyle\leftarrow$}}
\over K}_{t,\bm{x}_t}(d\bm{v})\nonumber\\
=&\bigg(-\bm{b}_{J,F}^{\epsilon}(\bm{x}_t,t)+\operatorname{p.v.}\int_{R^D} 1_{\{\bm{v}:\Vert\bm{v}\Vert\le\epsilon\}}\bm{v} \nu_t(d\bm{v})\bigg)\cdot\nabla f(\bm{x}_t)\nonumber\\
&+\operatorname{p.v.}\int_{R^D} \left(f(\bm{x}_t+\bm{v})-f(\bm{x}_t)\right)\frac{{{p}(\bm{x}_t+\bm{v})}}{{{p}(\bm{x}_t)}}\nu_t (d\bm{v})
\end{align}

Combining \eqref{complete_forward_generator} with the time reversal of the diffusion part, the complete backward generator is given by
\begin{align}\label{backward_generator_complete}
\left( {\mathcal{L}_Bf} \right)(\bm{x}_t) =&\big(-\bm{b}(\bm{x}_t,t)+\Sigma(t)\nabla\log p(\bm{x}_t)\big)\cdot\nabla f(\bm{x}_t)\nonumber\\
&+ \frac{1}{2}{\rm{Tr}}\left( {\Sigma(t) \nabla^2 f(\bm{x}_t)} \right) +(\mathcal{L}_{J,B}f)(\bm{x}_t)
\end{align}

Finally, if the scaled L\'evy measure $\nu_t$ is symmetric, we have $\nu_t (d(-\bm{v}))=\nu_t (d\bm{v})$. For the symmetric $\alpha$-stable case, the canonical jump drift vanishes. This further simplifies the backward generator and completes the proof.

\section{Proof of Proposition \ref{Proposition_3}}\label{appendix_2}
\setcounter{equation}{0}
\renewcommand{\theequation}{B.\arabic{equation}}
We first use a variable transformation to absorb the drift term. Let $\bm{Y}_t=\exp(-Rt)\bm{X}_t$. Based on Ito's formula,
\begin{align}
d\bm{Y}_t=&\exp(-Rt)d\bm{X}_t+d(\exp(-Rt))\bm{X}_t\nonumber\\
=&\exp(-Rt)d\bm{X}_t-R\exp(-Rt)\bm{X}_t\nonumber\\
=&\exp(-Rt)(\bm{s}dt+\Phi_G(t)d\bm{W}_t+\Phi_S(t)d\bm{L}_t)
\end{align}

According to $\bm{Y}_t=\exp(-Rt)\bm{X}_t$,
\begin{align}
\bm{X}_t=&\underbrace{\exp(Rt)\bm{X}_0+\exp(Rt)\int_{0}^{t}\exp(-Rl)\bm{s}dl}_{\bm{\mu}(t,\bm{x}_0)}\nonumber\\
&+\underbrace{\int_{0}^{t}\exp((t-l)R)\Phi_G(l)d\bm{W}_l}_{\bm{G}(t)}\nonumber\\
&+\underbrace{\int_{0}^{t}\exp((t-l)R)\Phi_S(l)d\bm{L}_l}_{\bm{S}(t)}
\end{align}

Then, based on the linearity of Gaussian RV and $\alpha$-stable RV, the proof is completed.

\section{Proof of Proposition \ref{proposition_4}}\label{appendix_3}
\setcounter{equation}{0}
\renewcommand{\theequation}{C.\arabic{equation}}
First, we have $\frac{p(\bm{x}_t+\bm{v}|\bm{y})}{p(\bm{x}_t|\bm{y})}\nu_t(d\bm{v})=\frac{p(\bm{x}_t+\bm{v},\bm{y})}{p(\bm{x}_t,\bm{y})}\nu_t(d\bm{v})$. Since the denominator is independent of $\bm{v}$, it can be omitted during sampling. For clarity, we add subscripts to indicate the arguments of the distributions. The target sampling distribution is given by
\begin{align}\label{derive_jump_posterior_sampling}
&\int_{\Vert\bm{v}\Vert>\epsilon}p_{\bm{X}_t,\bm{Y}}(\bm{x}_t+\bm{v},\bm{y})\nu_t(d\bm{v})\nonumber\\
\propto&\int_{\Vert\bm{v}\Vert>\epsilon}p_{\bm{Y}|\bm{X}_t}(\bm{y}|\bm{x}_t+\bm{v})p_{\bm{X}_t}(\bm{x}_t+\bm{v})\Vert\bm{v}\Vert^{-\alpha-D}d\bm{v}\nonumber\\
=&\int_{\Vert\bm{u}-\bm{x}_t\Vert>\epsilon} p_{\bm{Y}|\bm{X}_t}(\bm{y}|\bm{u})p_{\bm{X}_t}(\bm{u})\Vert\bm{u}-\bm{x}_t\Vert^{-\alpha-D}d\bm{u}\nonumber\\
=&\int_{\Vert\bm{u}-\bm{x}_t\Vert>\epsilon} \int_{\mathcal{R}^D}p_{\bm{Y}|\bm{X}_0}(\bm{y}|\bm{x}_0)p_{\bm{X}_0|\bm{X}_t}(\bm{x}_0|\bm{u})d\bm{x}_0 p_{\bm{X}_t}(\bm{u})\nonumber\\
&\times\Vert\bm{u}-\bm{x}_t\Vert^{-\alpha-D}d\bm{u}\nonumber\\
=&\int_{\Vert\bm{u}-\bm{x}_t\Vert>\epsilon} \int_{\mathcal{R}^D}p_{\bm{Y}|\bm{X}_0}(\bm{y}|\bm{x}_0)p_{\bm{X}_t|\bm{X}_0}(\bm{u}|\bm{x}_0) p_{\bm{X}_0}(\bm{x}_0)d\bm{x}_0\nonumber\\
&\times\Vert\bm{u}-\bm{x}_t\Vert^{-\alpha-D}d\bm{u}\nonumber\\
\approx&\int_{\Vert\bm{u}-\bm{x}_t\Vert>\epsilon} \frac{1}{N}\sum_{j=1}^{N} p_{\bm{Y}|\bm{X}_0}(\bm{y}|\bm{x}_{0,j})p_{\bm{X}_t|\bm{X}_0}(\bm{u}|\bm{x}_{0,j})\nonumber\\
&\times\Vert\bm{u}-\bm{x}_t\Vert^{-\alpha-D}d\bm{u}\nonumber\\
\propto&\int_{\Vert\bm{v}\Vert>\epsilon} \frac{1}{N}\sum_{j=1}^{N} p_{\bm{Y}|\bm{X}_0}(\bm{y}|\bm{x}_{0,j})p_{\bm{X}_t|\bm{X}_0}(\bm{v}+\bm{x}_t|\bm{x}_{0,j})\nu_t(d\bm{v})
\end{align}

By introducing the normalization factors and following procedures similar to those in \eqref{decompose_Q} and \eqref{rewritte_objective_PDF}, we can deduce that the target sampling distribution can also be decomposed into the conditional distribution and the weighted distribution $g_{\text{posterior},Q}(\bm{x})$, where
\begin{equation}
g_{\text{posterior},Q}(\bm{x})=\sum_{j=1}^{N}\frac{1_{\bm{x}_{0,j}}(\bm{x})p_{\bm{Y}|\bm{X}_0}(\bm{y}|\bm{x}_{0,j})Q_{t,\epsilon,j}}{\sum_{j=1}^{N}p_{\bm{Y}|\bm{X}_0}(\bm{y}|\bm{x}_{0,j})Q_{t,\epsilon,j}}
\end{equation}
\end{appendices}

\footnotesize
\bibliographystyle{IEEEtran}
\bibliography{ref}

@book{book1,
address={Cambridge},
series={Cambridge Studies in Advanced Mathematics},
title={Lévy Processes and Stochastic Calculus},
publisher={Cambridge University Press},
author={Applebaum, David},
year={2004}}

@book{book2,
editor = {Daniel Zwillinger and Victor Moll and I.S. Gradshteyn and I.M. Ryzhik},
title = {Table of Integrals, Series, and Products (Eighth Edition)},
publisher = {Academic Press},
address = {Boston},
year = {2014}
}

@book{book3,
title={Stable non-Gaussian random processes: Stochastic models with infinite variance},
author={Gennady Samorodnitsky and Murad S. Taqqu Chapman and Hall},
publisher={Chapman \& Hall},
address={New York},
year={1994}}

@article{paper1,
  title={Generative modeling by estimating gradients of the data distribution},
  author={Song, Yang and Ermon, Stefano},
  journal={Advances in neural information processing systems},
  volume={32},
  year={2019}
}

@article{paper2,
  title={Denoising diffusion probabilistic models},
  author={Ho, Jonathan and Jain, Ajay and Abbeel, Pieter},
  journal={Advances in neural information processing systems},
  volume={33},
  pages={6840--6851},
  year={2020}
}

@article{paper3,
      title={Score-Based Generative Modeling through Stochastic Differential Equations},
      author={Yang Song and Jascha Sohl-Dickstein and Diederik P. Kingma and Abhishek Kumar and Stefano Ermon and Ben Poole},
      year={2021},
      journal={International Conference on Learning Representations},
      volume={9},
      url= {https://mlanthology.org/iclr/2021/song2021iclr-scorebased/}
}

@ARTICLE{paper4,
  author={Vincent, Pascal},
  journal={Neural Computation},
  title={A Connection Between Score Matching and Denoising Autoencoders},
  year={2011},
  volume={23},
  number={7},
  pages={1661-1674},
  month={Jul.}}

@article{paper5,
author = {Li, Ji and Wang, Chao},
title = {Efficient Diffusion Posterior Sampling for Noisy Inverse Problems},
journal = {SIAM Journal on Imaging Sciences},
volume = {18},
number = {2},
pages = {1468-1492},
year = {2025}}

@article{paper6,
  title={Denoising diffusion restoration models},
  author={Kawar, Bahjat and Elad, Michael and Ermon, Stefano and Song, Jiaming},
  journal={Advances in neural information processing systems},
  volume={35},
  pages={23593--23606},
  year={2022}
}

@article{paper7,
  title={Improving diffusion models for inverse problems using manifold constraints},
  author={Chung, Hyungjin and Sim, Byeongsu and Ryu, Dohoon and Ye, Jong Chul},
  journal={Advances in Neural Information Processing Systems},
  volume={35},
  pages={25683--25696},
  year={2022}
}

@inproceedings{paper8,
  title={Solving inverse problems with latent diffusion models via hard data consistency},
  author={Song, Bowen and Kwon, Soo Min and Zhang, Zecheng and Hu, Xinyu and Qu, Qing and Shen, Liyue},
  booktitle={International Conference on Learning Representations},
  volume={2024},
  pages={7624--7654},
  year={2024}
}

@ARTICLE{paper9,
  author={Zilberstein, Nicolas and Sabharwal, Ashutosh and Segarra, Santiago},
  journal={IEEE Transactions on Signal Processing},
  title={Solving Linear Inverse Problems Using Higher-Order Annealed Langevin Diffusion},
  year={2024},
  volume={72},
  number={},
  pages={492-505},
  month={Jan.}}

@INPROCEEDINGS{paper10,
  author={Hu, Yuanyun and Bell, Evan and Wang, Guijin and Sun, Yu},
  booktitle={ICASSP 2026 - 2026 IEEE International Conference on Acoustics, Speech and Signal Processing (ICASSP)},
  title={PRISM: Probabilistic and Robust Inverse Solver with Measurement-Conditioned Diffusion Prior for Blind Inverse Problems},
  year={2026},
  volume={},
  number={},
  pages={11432-11436}}

@inproceedings{paper11,
  title={Diffir: Efficient diffusion model for image restoration},
  author={Xia, Bin and Zhang, Yulun and Wang, Shiyin and Wang, Yitong and Wu, Xinglong and Tian, Yapeng and Yang, Wenming and Van Gool, Luc},
  booktitle={Proceedings of the IEEE/CVF international conference on computer vision},
  pages={13095--13105},
  year={2023}
}

@INPROCEEDINGS{paper12,
  author={Fei, Ben and Lyu, Zhaoyang and Pan, Liang and Zhang, Junzhe and Yang, Weidong and Luo, Tianyue and Zhang, Bo and Dai, Bo},
  booktitle={2023 IEEE/CVF Conference on Computer Vision and Pattern Recognition (CVPR)},
  title={Generative Diffusion Prior for Unified Image Restoration and Enhancement},
  year={2023},
  volume={},
  number={},
  pages={9935-9946}}

@ARTICLE{paper13,
  author={Chen, Yunjin and Pock, Thomas},
  journal={IEEE Transactions on Pattern Analysis and Machine Intelligence},
  title={Trainable Nonlinear Reaction Diffusion: A Flexible Framework for Fast and Effective Image Restoration},
  year={2017},
  volume={39},
  number={6},
  pages={1256-1272},
  month={Jun.}}

@ARTICLE{paper14,
  author={Özdenizci, Ozan and Legenstein, Robert},
  journal={IEEE Transactions on Pattern Analysis and Machine Intelligence},
  title={Restoring Vision in Adverse Weather Conditions With Patch-Based Denoising Diffusion Models},
  year={2023},
  volume={45},
  number={8},
  pages={10346-10357},
  month={Aug.}}

@ARTICLE{paper15,
  author={Zhou, Xingyu and Liang, Le and Zhang, Jing and Jiang, Peiwen and Li, Yong and Jin, Shi},
  journal={IEEE Transactions on Wireless Communications},
  title={Generative Diffusion Models for High Dimensional Channel Estimation},
  year={2025},
  volume={24},
  number={7},
  pages={5840-5854},
  month={Jul.}}

@ARTICLE{paper16,
  author={Arvinte, Marius and Tamir, Jonathan I.},
  journal={IEEE Transactions on Wireless Communications},
  title={MIMO Channel Estimation Using Score-Based Generative Models},
  year={2023},
  volume={22},
  number={6},
  pages={3698-3713},
  month={Jun.}}

@ARTICLE{paper17,
  author={Li, Runhua and Sun, Jian and Xue, Jiang},
  journal={IEEE Journal on Selected Areas in Communications},
  title={Generative Diffusion-Based Bayesian Modeling for Universal Channel Estimation},
  year={2026},
  volume={44},
  number={},
  pages={3104-3119},
  month={Dec.}}

@article{paper18,
  title={Elucidating the design space of diffusion-based generative models},
  author={Karras, Tero and Aittala, Miika and Aila, Timo and Laine, Samuli},
  journal={Advances in neural information processing systems},
  volume={35},
  pages={26565--26577},
  year={2022}
}

@article{paper19,
  title={Score-based generative models with L{\'e}vy processes},
  author={Yoon, Eun Bi and Park, Keehun and Kim, Sungwoong and Lim, Sungbin},
  journal={Advances in neural information processing systems},
  volume={36},
  pages={40694--40707},
  year={2023}
}

@inproceedings{paper20,
  title={HEAVY-TAILED DIFFUSION WITH DENOISING L{\'e}vy PROBABILISTIC MODELS},
  author={Shariatian, Dario and Simsekli, Umut and Oliviero Durmus, Alain},
  booktitle={International Conference on Learning Representations},
  volume={2025},
  pages={96991--97024},
  year={2025}
}

@article{paper21,
title = {Time reversal of Markov processes with jumps under a finite entropy condition},
journal = {Stochastic Processes and their Applications},
volume = {144},
pages = {85-124},
year = {2022},
author = {Giovanni Conforti and Christian Léonard},
month={Feb.}}

@inproceedings{paper22,
  title     = {Flow Matching for Generative Modeling},
  author    = {Lipman, Yaron and Chen, Ricky T. Q. and Ben-Hamu, Heli and Nickel, Maximilian and Le, Matthew},
  booktitle = {The Eleventh International Conference on Learning Representations},
  year      = {2023},
  address   = {Kigali, Rwanda}
}

@ARTICLE{paper23,
  author={Sureka, Gaurav and Kiasaleh, Kamran},
  journal={IEEE Trans. Commun.},
  title={Sub-Optimum Receiver Architecture for AWGN Channel with Symmetric Alpha-Stable Interference},
  year={2013},
  month={May.},
  volume={61},
  number={5},
  pages={1926-1935}}

@ARTICLE{paper24,
  author={Qi, Tianfu and Zhang, Jing and Wang, Jun and Zhu, Yichao},
  journal={IEEE Trans. Commun.},
  title={Bursty Mixed Gaussian-Impulsive Noise Model and Parameter Estimation},
  year={2025},
  volume={73},
  number={9},
  pages={8274-8288},
  month={Sept.}}

@inproceedings{paper25,
  title     = {Diffusion Posterior Sampling for General Noisy Inverse Problems},
  author    = {Chung, Hyungjin and Kim, Jeongsol and McCann, Michael Thompson and Klasky, Marc Louis and Ye, Jong Chul},
  booktitle = {The Eleventh International Conference on Learning Representations (ICLR 2023)},
  year      = {2023},
  address   = {Kigali, Rwanda}
}

@ARTICLE{paper26,
  author={Feng, Xiao and Wang, Junfeng and Kuai, Xiaoyan and Zhou, Mingzhang and Sun, Haixin and Li, Jianghui},
  journal={IEEE Transactions on Vehicular Technology},
  title={Message Passing-Based Impulsive Noise Mitigation and Channel Estimation for Underwater Acoustic OFDM Communications},
  year={2022},
  volume={71},
  number={1},
  pages={611-625},
  month={Jan.}}

@ARTICLE{paper27,
  author={Wang, Zhizhan and Li, Yuzhou and Wang, Chengcai and Ouyang, Donghong and Huang, Yunlong},
  journal={IEEE Wireless Communications Letters},
  title={A-OMP: An Adaptive OMP Algorithm for Underwater Acoustic OFDM Channel Estimation},
  year={2021},
  volume={10},
  number={8},
  pages={1761-1765},
  month={Aug.}}

@ARTICLE{paper28,
  author={Zhang, Zheming and Han, Xiao and Li, Weizhe and Wei, Li and Yin, Jingwei},
  journal={IEEE Wireless Communications Letters},
  title={Robust Time-Correlated Sparse Bayesian Channel Estimation for Short-Block Underwater Acoustic Communications Under Impulsive Noise},
  year={2026},
  volume={15},
  number={},
  pages={3194-3198},
  month={May.}}

\end{document}